\algrenewcommand\algorithmicrequire{\textbf{Input:}}
\algrenewcommand\algorithmicensure{\textbf{Output:}}
\theoremstyle{plain}
\newtheorem{theorem}{Theorem}[section]
\newtheorem{lemma}[theorem]{Lemma}
\newtheorem{proposition}[theorem]{Proposition}
\theoremstyle{remark}
\newcommand{\mean}{\mathbb{E}}
\newcommand{\med}{\text{med}}
\newcommand{\ud}{\mathop{}\!\mathrm{d}}
\newcommand{\cond}{\,|\,}
\DeclareMathOperator{\Normal}{\mathcal{N}}
\DeclareMathOperator{\GP}{\mathcal{GP}}
\DeclareMathOperator{\Unif}{\mathcal{U}}
\newcommand{\Bb}{\mathbf{b}}
\newcommand{\Bh}{\mathbf{h}}
\newcommand{\Bk}{\mathbf{k}}
\newcommand{\Bs}{\mathbf{s}}
\newcommand{\Bx}{\mathbf{x}}
\newcommand{\BB}{\mathbf{B}}
\newcommand{\BH}{\mathbf{H}}
\newcommand{\BK}{\mathbf{K}}
\newcommand{\BR}{\mathbf{R}}
\newcommand{\BW}{\mathbf{W}}
\newcommand{\BSigma}{\boldsymbol{\Sigma}}
\newcommand{\Bpsi}{\boldsymbol{\psi}}
\newcommand{\Btheta}{\boldsymbol{\theta}}
\newcommand{\Bvartheta}{\boldsymbol{\vartheta}}
\newcommand{\Bphi}{\boldsymbol{\phi}}
\renewcommand{\epsilon}{\varepsilon}
\newcommand{\Bgamma}{\boldsymbol{\gamma}}
\newcommand{\BDelta}{\boldsymbol{\Delta}}
\newcommand{\reals}{\mathbb{R}}
\newcommand{\Dset}{\mathscr{D}}
\newcommand*{\half}[1][2]{\frac{1}{#1}}
\newcommand{\Bzeros}{\mathbf{0}}
\newcommand{\Id}{\mathbf{I}}
\newcommand{\bvn}{\textnormal{BvN}} 
\newcommand{\myquad}{\quad\quad}
\newcommand{\Deltav}{\tau^2}
\newcommand{\lik}{f}
\newcommand{\eqdef}{\triangleq}
\newcommand{\T}{^{\top}}
\newcommand{\maxv}{\textnormal{MAXV}}
\newcommand{\maxmad}{\textnormal{MAXMAD}}
\newcommand{\eiv}{\textnormal{EIV}}
\newcommand{\eimad}{\textnormal{EIMAD}}
\newcommand{\bigO}{\mathcal{O}} 
\newcommand{\indic}{\mathds{1}}
\newcommand{\simiid}{\overset{\text{i.i.d.}}{\sim}}
\newcommand{\opt}{^{\text{opt}}}
\newcommand{\probmeas}{\Pi}
\newcommand{\piabc}{\pi_{\textnormal{ABC}}}
\newcommand{\tildepiabc}{\tilde{\pi}_{\textnormal{ABC}}}
\newcommand{\pie}{\pi_{\epsilon}}
\newcommand{\Bxobs}{\Bx_{\textnormal{o}}}
\newcommand{\Bsobs}{\Bs_{\textnormal{o}}}
\newcommand{\babc}{Bayesian ABC}
\newcommand{\owen}{Owen's T function}
\newcommand{\iter}{i} 
\renewcommand\cite{\citep}
\newcommand{\appe}{\textnormal{Appendix}}
\newcommand{\argmax}[1]{\underset{#1}{\operatorname{arg}\operatorname{max}}\;}
\newcommand{\argmin}[1]{\underset{#1}{\operatorname{arg}\operatorname{min}}\;}
\def\app#1#2{%
  \mathrel{%
    \setbox0=\hbox{$#1\sim$}%
    \setbox2=\hbox{%
      \rlap{\hbox{$#1\propto$}}%
      \lower1.1\ht0\box0%
    }%
    \raise0.25\ht2\box2%
  }%
}
\title{Batch simulations and uncertainty quantification in Gaussian process surrogate approximate Bayesian computation}
\author{{\bf Marko J\"{a}rvenp\"{a}\"{a}}\thanks{Current address: Department of Biostatistics, University of Oslo, email: \texttt{m.j.jarvenpaa@medisin.uio.no}}, Aki Vehtari, Pekka Marttinen \\
Helsinki Institute for Information Technology HIIT, Department of Computer Science, 
Aalto University \\
}
\begin{document}

\maketitle

\begin{abstract}
The computational efficiency of approximate Bayesian computation (ABC) has been improved by using surrogate models such as Gaussian processes (GP). In one such promising framework the discrepancy between the simulated and observed data is modelled with a GP which is further used to form a model-based estimator for the intractable posterior. 
In this article we improve this approach in several ways.
We develop batch-sequential Bayesian experimental design strategies to parallellise the expensive simulations. In earlier work only sequential strategies have been used. 
Current surrogate-based ABC methods also do not fully account the uncertainty due to the limited budget of simulations as they output only a point estimate of the ABC posterior. We propose a numerical method to fully quantify the uncertainty in, for example, ABC posterior moments. We also provide some new analysis on the GP modelling assumptions in the resulting improved framework called \babc{} and discuss its connection to Bayesian quadrature (BQ) and Bayesian optimisation (BO). Experiments with toy and real-world simulation models demonstrate advantages of the proposed techniques. 
\end{abstract}

\section{INTRODUCTION}

Approximate Bayesian computation \cite{Beaumont2002,Marin2012} is used for Bayesian inference when the likelihood function of a statistical model of interest is intractable, i.e.,~when the analytical form of the likelihood is either unavailable or too costly to evaluate, but simulating the model is feasible. 
The main idea of the ABC rejection sampler \cite{Pritchard1999} is to draw a parameter from the prior, use it to simulate one pseudo-data set and finally accept the parameter as a draw from an approximate posterior if the discrepancy between the simulated and observed data sets is small enough. 
%
While the computational efficiency of this basic ABC algorithm has been improved in several ways, many models e.g.~in genomics and epidemiology \cite{Numminen2013,Marttinen2015}, astronomy \cite{Rogers2019} and climate science \cite{Holden2018} are expensive-to-simulate which makes the sampling-based ABC inference algorithms infeasible. 
%
To increase sample-efficiency of ABC,
various methods using surrogate models such as neural networks \cite{Papamakarios2016,Papamakarios2018,Greenberg2019} and Gaussian processes
\cite{Meeds2014,Wilkinson2014,Gutmann2016,Jarvenpaa2018_aoas,Jarvenpaa2018_acq} have been proposed. 

In one promising surrogate-based ABC framework the discrepancy between the observed and simulated data, a key quantity in ABC, is modelled with a GP \cite{Gutmann2016,Jarvenpaa2018_aoas,Jarvenpaa2018_acq}.
The GP model is then used to form an estimator for the (approximate) posterior and to adaptively select new evaluation locations. 
Sequential \emph{Bayesian experimental design} (also known as \emph{active learning}) methods to select the simulation locations so as to maximise the sample-efficiency were proposed by \citet{Jarvenpaa2018_acq}. However, their methods allow to run only one simulation at a time although in practice one often has access to multiple cores to run some of the simulations in parallel. 
In this work, we resolve this limitation by developing batch simulation methods which are then shown to considerably decrease the wall-time needed for ABC inference. 
Our approach (Section \ref{sec:parallel}) is based on a Bayesian decision theoretic framework recently developed by \citet{Jarvenpaa2019_sl} who, however, assumed that expensive and potentially noisy likelihood evaluations are available (e.g.~by synthetic likelihood method \citep{Wood2010,Price2018}). In this work we instead focus on ABC scenario where only less than a thousand model simulations can be obtained. 

In practice the posterior distribution is often summarised for further decision making using e.g.~the expectation and variance. When the computational resources for ABC inference are limited, it would be important to assess the accuracy of such summaries, but this has not been done in earlier work.
As the second main contribution of this article, we devise an approximate numerical method to propagate the uncertainty of the discrepancy, represented by the GP model, to the resulting ABC posterior summaries (Section \ref{sec:post_uncertainty}). Such uncertainty estimates are useful for assessing the accuracy of the inference and guiding the termination of the inference algorithm. 
We call the resulting improved framework as \emph{\babc{}} in analogy with the related problems of \emph{Bayesian} quadrature and \emph{Bayesian} optimisation. 

We also provide new insights on the underlying GP modelling assumptions (\appe{}~\ref{appsubsec:gaussianity}) and on the connections between \babc{}, BQ and BO to improve understanding of these conceptually similar techniques (Section \ref{sec:theory}). 
Finally, we demonstrate the ABC posterior uncertainty quantification and show that \babc{} framework is well-suited for parallel simulations using several numerical experiments (Section \ref{sec:experiments}).

\section{BRIEF BACKGROUND ON ABC} \label{subsec:abc}

%
We denote the (continuous) parameters of the statistical model of interest with $\Btheta\in\Theta\subset\reals^p$. 
The posterior distribution, that describes our knowledge of $\Btheta$ given some observed data $\Bxobs\in\mathcal{X}$ and a prior density $\pi(\Btheta)$, can then be computed using Bayes' theorem
%
\begin{align}
\pi(\Btheta \cond \Bxobs) 
&= \frac{\pi(\Btheta)\pi(\Bxobs\cond\Btheta)}{\int_{\Theta} \pi(\Btheta')\pi(\Bxobs\cond\Btheta') \ud \Btheta'}.
%
\label{eq:bayes}
\end{align}
%
If the likelihood function $\pi(\Bxobs\cond\Btheta)$ is intractable, evaluating (\ref{eq:bayes}) even up-to-normalisation becomes infeasible. Standard ABC algorithms such as the ABC rejection sampler instead target the approximate posterior 
\begin{align}
\piabc(\Btheta \!\cond\! \Bxobs)\!\eqdef\!\frac{\pi(\Btheta)\!\int_{\mathcal{X}}\!\pie(\Bxobs\!\cond\!\Bx) \pi(\Bx\!\cond\!\Btheta) \ud \Bx}
{\int_{\Theta}\!\pi(\Btheta')\!\int_{\mathcal{X}}\!\pie(\Bxobs\!\cond\!\Bx') \pi(\Bx'\!\cond\!\Btheta') \ud \Bx' \ud \Btheta'}, \label{eq:abc_post}
\end{align}
where $\pie(\Bxobs\cond\Bx) = \indic_{\Delta(\Bxobs,\Bx) \leq \epsilon}$. Other choices of kernel $\pie$ are also possible \cite{Wilkinson2013}. Above, $\Delta:\mathcal{X}^2\rightarrow\reals_+$ is the discrepancy function used to compare the similarity of the data sets and $\epsilon$ is a threshold parameter. Small $\epsilon$ produces good approximations but renders sampling-based ABC methods inefficient. 
A well-constructed discrepancy function is an important ingredient of accurate ABC inference \cite{Marin2012}. In this article we assume a suitable discrepancy function is already available (e.g.~constructed based on expert opinion, earlier analyses on other similar models, pilot runs or distance measures between raw data sets \cite{Park2016,Bernton2019}) and focus on approximating any given ABC posterior in (\ref{eq:abc_post}) as well as possible given only a limited budget of simulations.

\section{BAYESIAN ABC FRAMEWORK} \label{sec:post_babc}\label{subsec:babc}

We describe our \babc{} framework here. The main difference to earlier work \citep{Jarvenpaa2018_acq} is that we use a hierarchical GP model and, most importantly, explicitly quantify the uncertainty of the ABC posterior instead of resorting to point estimation. 
The main idea is to explicitly use another layer of Bayesian inference to estimate the ABC posterior in (\ref{eq:abc_post}). The previously simulated discrepancy-parameter-pairs are treated as data to learn a surrogate model, which will predict the discrepancy for a given parameter value. The surrogate model is further used to form an estimator for the ABC posterior in (\ref{eq:abc_post}) and to adaptively acquire new data. 

We assume that each discrepancy evaluation, denoted by $\Delta_i$ at the corresponding parameter $\Btheta_i$, is generated as
\begin{equation}
    \Delta_i = f(\Btheta_i) + \nu_i, \quad \nu_i \simiid \Normal(0,\sigma_n^2),
    \label{eq:noisemodel}
\end{equation}
where $\sigma_n^2>0$ is the variance of the discrepancy\footnote{While this modelling assumption may seem strong, it has been used successfully before. We now give a justification for this modelling choice in \appe{} \ref{appsubsec:gaussianity}.}. 
%
To encode the assumptions of smoothness and e.g.~potential quadratic shape of the discrepancy $\Delta_{\Btheta}$, in this work its unknown mean function $f$ is given a hierarchical GP prior
%
\begin{align}
\begin{split}
    f \cond \Bgamma &\sim \GP(m_0(\Btheta),k_{\Bphi}(\Btheta,\Btheta')), \\
    m_0(\Btheta) &\eqdef \sum_{i=1}^r \gamma_i h_i(\Btheta), \quad 
    \Bgamma \sim \Normal(\Bb,\BB), \label{eq:gp_prior}
\end{split}
\end{align}
where $k_{\Bphi}:\Theta^2\rightarrow\reals$ is a covariance function with hyperparameters $\Bphi$ and $h_i:\Theta\rightarrow\reals$ are basis functions (both assumed continuous). 
%
We marginalise $\Bgamma$ in (\ref{eq:gp_prior}), as in \citet{OHagan1978}, and \citet{Riihimaki2014}, 
to obtain the GP prior
\begin{equation}
    \lik \sim \GP(\Bh(\Btheta)\T\Bb,k_{\Bphi}(\Btheta,\Btheta') + \Bh(\Btheta)\T\BB \Bh(\Btheta')),
\end{equation}
where $\Bh(\Btheta)\in\reals^r$ is a column vector consisting of the basis functions $h_i$ evaluated at $\Btheta$.
For now, we assume the GP hyperparameters $\Bpsi \eqdef (\sigma_n^2,\Bphi)$ are fixed and omit $\Bpsi$ from our notation for brevity.

Given training data $D_{t}\eqdef\{(\Delta_i,\Btheta_i)\}_{i=1}^t$, we obtain $f \cond D_{t} \sim \GP(m_{t}(\Btheta),c_{t}(\Btheta, \Btheta'))$, 
\begin{align}
    m_{t}(\Btheta) &\eqdef \Bk_{t}(\Btheta) \BK^{-1}_{t} \BDelta_{t} 
    + \BR_{t}\T(\Btheta) \bar{\Bgamma}_{t}, \label{eq:gp_mean} \\ 
    %
    \begin{split}
    c_{t}(\Btheta,\Btheta') 
    &\eqdef k(\Btheta,\Btheta') 
    - \Bk_{t}(\Btheta) \BK^{-1}_{t} \Bk\T_{t}(\Btheta') \\
    &+\BR_{t}\T(\Btheta)[\BB^{-1}+\BH_{t}\BK^{-1}_{t}\BH_{t}\T]^{-1} \BR_{t}(\Btheta'), 
    \end{split} \label{eq:gp_cov} 
    %
\end{align}
where $[\BK_{t}]_{ij} \eqdef k(\Btheta_{i},\Btheta_{j}) + \indic_{i=j}\sigma_n^2$, $\Bk_t(\Btheta) \eqdef (k(\Btheta,\Btheta_1),\ldots,k(\Btheta,\Btheta_t))\T$, $\BDelta_{t} \eqdef (\Delta_1,\ldots,\Delta_t)\T$ and
\begin{align}
    \bar{\Bgamma}_{t} &\eqdef [\BB^{-1}\!+\!\BH_{t}\BK^{-1}_{t}\BH_{t}\T]^{-1}\!(\BH_{t}\BK^{-1}_{t}\BDelta_{t}\!+\! \BB^{-1}\Bb), \\
    %
    &\BR_{t}(\Btheta) \eqdef \BH(\Btheta) - \BH_{t}\BK^{-1}_{t}\Bk_{t}\T(\Btheta).
\end{align}
Above $\bar{\Bgamma}_{t}$ is the generalised least-squares estimate, $\BH_{t}$ is the $r\times t$ matrix whose columns consist of basis function values evaluated at $\Btheta_{1:t}$, $\Btheta_{1:t}$ is a $p\times t$ matrix, and $\BH(\Btheta)\in\reals^r$ is the corresponding vector of test point $\Btheta$. 
We also define $s_{t}^2(\Btheta) \eqdef c_{t}(\Btheta,\Btheta)$ and $\probmeas_{D_t}^f \eqdef \GP(m_t(\Btheta),c_t(\Btheta, \Btheta'))$.
For further details of GP regression, see e.g.~\citet{Rasmussen2006}. 

If the true discrepancy mean function $f$ and the variance of the discrepancy $\sigma_n^2$ were known, the ABC posterior 
would be 
\begin{align}
\piabc^f(\Btheta) \eqdef \frac{\pi(\Btheta)\Phi\left({(\epsilon-f(\Btheta))}/{\sigma_n}\right)}
{\int_{\Theta}\pi(\Btheta')\Phi\left({(\epsilon-f(\Btheta'))}/{\sigma_n}\right) \ud \Btheta'}, 
\label{eq:mod_based_abc_post}
\end{align}
where $\Phi(\cdot)$ is the Gaussian cdf. This fact follows from (\ref{eq:abc_post}) and the Gaussian modelling assumption (\ref{eq:noisemodel}). 
In practice $f$ is unknown but our knowledge about $f$ is represented by the posterior $f \cond D_t \sim \probmeas_{D_t}^f$. Since the ABC posterior $\piabc^f$ in (\ref{eq:mod_based_abc_post}) depends on $f$, it is also a random quantity and its posterior can be obtained by propagating the uncertainty in $f$ through the mapping $f\mapsto\piabc^f$. 

Computing the distribution of $\piabc^f$ is difficult due to its nonlinear dependence on $f$ and because $f$ is infinite-dimensional. 
However, the pointwise mean, variance and quantiles of the unnormalised ABC posterior
%
%
\begin{align}
    \tildepiabc^f(\Btheta) \eqdef \pi(\Btheta)\Phi(({\epsilon-f(\Btheta)})/{\sigma_n}), \label{eq:unnorm_abc}
\end{align}
i.e.~the numerator of (\ref{eq:mod_based_abc_post}), can be computed analytically as shown by \citet{Jarvenpaa2018_acq} in the case of a zero mean GP prior. It is easy to see that their formulas also hold for our more general GP model in (\ref{eq:gp_prior}). For example, 
\begin{align}
\mean_{f\cond D_t}(\tildepiabc^f(\Btheta)) &= \pi(\Btheta)\Phi(a_t(\Btheta)), \label{eq:pimean} \\
a_t(\Btheta) &\eqdef \textstyle{({\epsilon \!-\! m_t(\Btheta)})/{\!\sqrt{\sigma_n^2\!+\!s^2_t(\Btheta)}}}, \label{eq:at} \\
\med_{f\cond D_t}(\tildepiabc^f(\Btheta)) &= \pi(\Btheta)\Phi\left({(\epsilon\!-\!m_t(\Btheta))}/{\sigma_n}\right),
\label{eq:pimed}
\end{align}
where $\med$ is the marginal (i.e.~elementwise) median. 
%
While these formulas are useful, they do not allow one to assess the uncertainty of e.g.~posterior mean $\int_{\Theta}\Btheta\,\piabc^f(\Btheta)\ud\Btheta$. We resolve this limitation in Section \ref{sec:post_uncertainty}.

\section{PARALLEL SIMULATIONS} \label{sec:parallel}

We aim to find the most informative simulation locations for obtaining the best possible estimate of the ABC posterior $\piabc^f$ given the postulated GP model. 
Principled sequential designs, where one simulation is run at a time, were developed by \citet{Jarvenpaa2018_acq}. In practice, to decrease the wall-time needed for the inference task, one could run some of the simulations in parallel. In the following, we apply Bayesian experimental design theory \citep{Chaloner1995,Jarvenpaa2019_sl} for the (synchronous) batch setting where $b$ simulations are simultaneously selected to be computed in parallel. 

\subsection{DECISION-THEORETIC APPROACH} \label{subsec:decision}

Consider a loss function $l:\Dset^2\rightarrow\reals_+$ so that $l(\piabc,d)$ quantifies the penalty of reporting $d\in\Dset$ as our ABC posterior when the true one is $\piabc\in\Dset$. 
Given $D_t$, the one-batch-ahead Bayes-optimal selection of the next batch of $b$ evaluations $\Btheta\opt = [\Btheta_1\opt,\ldots,\Btheta_b\opt]$ is then 
%
%
\begin{align}
    \Btheta\opt\!&=\!\argmin{\Btheta^*\in\Theta^b}L_t(\Btheta^*), \quad\text{where} \\
L_t(\Btheta^*)\!&=\!\mean_{\BDelta\!^*\!\cond\Btheta^*,D_t} \!\Big( 
\underbrace{\min_{d\in\Dset} \mean_{f\cond D_t\cup D^*} l(\piabc^f,d)}_{\eqdef \mathcal{L}(\probmeas_{D_t\cup D^*}^f)}\!\Big). 
\label{eq:cr}
\end{align}
%
In (\ref{eq:cr}), we calculate an expectation over future discrepancy evaluations $\BDelta\!^*=(\Delta^*_1,\ldots,\Delta^*_b)\T$ at locations $\Btheta^*$, assuming they follow our current GP model. The expectation is taken of the \emph{Bayes risk} $\mathcal{L}(\probmeas_{D_t\cup D^*}^f)$ resulting from the nested decision problem of choosing the estimator $d$, assuming $\BDelta\!^*$ are known and merged with current data $D_t$ via $D^* \eqdef \{(\Delta_i^*,\Btheta_i^*)\}_{i=1}^b$. 
While the main quantity of interest in the \babc{} framework is the ABC posterior $\piabc^f$ in (\ref{eq:mod_based_abc_post}), in practice it is desirable to use a loss function $\tilde{l}$ based on the unnormalised distribution $\tildepiabc^f$. Such a simplification, also used by \citet{Kandasamy2015,Sinsbeck2017,Jarvenpaa2018_acq,Jarvenpaa2019_sl}, allows efficient computations. Furthermore, evaluations that are optimal for estimating $\tildepiabc^f$ will be informative about the related quantity $\piabc^f$. 
%

Consider $L^2$ loss function $\tilde{l}_2 \eqdef \int_{\Theta}(\tildepiabc^f(\Btheta) - \tilde{d}(\Btheta))^2\ud\Btheta$ between the unnormalised ABC posterior $\tildepiabc^f$ and its estimator $\tilde{d}$ (both supposed to be square-integrable in $\Theta$ i.e.~$\tildepiabc^f,\tilde{d}\in L^2(\Theta)$). Then the optimal estimator is the mean in (\ref{eq:pimean}) \cite{Sinsbeck2017,Jarvenpaa2019_sl}. 
If we instead consider $L^1$ loss $\tilde{l}_1 \eqdef \int_{\Theta}|\tildepiabc^f(\Btheta) - \tilde{d}(\Btheta)|\ud\Btheta$ (supposing $\tildepiabc^f,\tilde{d}\in L^1(\Theta)$), then the marginal median in (\ref{eq:pimed}) is the optimal estimator. Corresponding Bayes risks, denoted $\mathcal{L}^{\textnormal{v}}$ and $\mathcal{L}^{\textnormal{m}}$, respectively, can be computed as follows: 
%
\begin{lemma} \label{lemma:imad}
Consider the GP model in Section \ref{sec:post_babc}. The Bayes risks for the $L^2$ and $L^1$ losses are given by
\begin{align}
\begin{split}
\mathcal{L}^{\textnormal{v}}(\probmeas_{D_t}^f) \!&=\! \int_{\Theta}\!\pi^2(\Btheta) \Big[ \Phi(a_t(\Btheta))\Phi(-a_t(\Btheta)) \\
%
%
&\quad\textstyle{-2T\Big(a_t(\Btheta), {\sigma_n/\!\sqrt{{\sigma_n^2\!+\!2s_t^2(\Btheta)}}}\Big)\Big]\!\ud\Btheta},
\end{split}
\label{eq:iv} \\
%
\mathcal{L}^{\textnormal{m}}(\probmeas_{D_t}^f) \!&=\! 2\!\int_{\Theta}\!\pi(\Btheta)T( a_t(\Btheta), {s_t(\Btheta)}/{\sigma_n} )\ud\Btheta, \label{eq:imad}
\end{align}
respectively, where $a_t(\Btheta)$ is given by (\ref{eq:at}) and $T(\cdot,\cdot)$ denotes \owen{} \cite{Owen1956}. 
\end{lemma}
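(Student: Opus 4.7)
\emph{Proof plan.} The strategy is to reduce each Bayes risk to an integral over $\Btheta$ of a pointwise scalar quantity, and then to compute that quantity as a bivariate-normal probability that collapses to Owen's T. Under the hierarchical GP prior of Section \ref{sec:post_babc} the marginal posterior $f(\Btheta) \cond D_t$ is still $\Normal(m_t(\Btheta), s_t^2(\Btheta))$, so the reduction is structurally identical to the zero-mean treatment of \citet{Jarvenpaa2018_acq}; only the explicit forms of $m_t$ and $s_t^2$ differ. Standard decision theory identifies the $L^2$-optimiser as the pointwise mean (\ref{eq:pimean}) and the $L^1$-optimiser as the pointwise median (\ref{eq:pimed}), so Fubini reduces the task to evaluating, at each $\Btheta$, the variance and the mean absolute deviation from the median of $\Phi(A - BZ)$, where $Z \sim \Normal(0,1)$, $A = (\epsilon - m_t(\Btheta))/\sigma_n$ and $B = s_t(\Btheta)/\sigma_n$.

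For (\ref{eq:iv}), I would represent $\Phi(A-BZ)^2 = \prob(Y_1 \leq A-BZ,\, Y_2 \leq A-BZ \cond Z)$ with $Y_1, Y_2 \simiid \Normal(0,1)$, so that $\mean[\Phi(A-BZ)^2] = \prob(Y_1 + BZ \leq A,\, Y_2 + BZ \leq A) = \Phi_2(h,h;\rho)$ (the bivariate normal CDF) with $\rho = B^2/(1+B^2)$ and $h = A/\sqrt{1+B^2} = a_t(\Btheta)$. The classical Owen identity
\begin{equation}
\Phi_2(h,h;\rho) = \Phi(h) - 2T\!\left(h,\,\sqrt{(1-\rho)/(1+\rho)}\right), \label{eq:owen_id}
\end{equation}
together with $\sqrt{(1-\rho)/(1+\rho)} = 1/\sqrt{1+2B^2} = \sigma_n/\sqrt{\sigma_n^2 + 2s_t^2(\Btheta)}$, then produces $\mean[\Phi(A-BZ)^2]$; subtracting $\Phi(a_t(\Btheta))^2$ and using $\Phi(h)(1-\Phi(h)) = \Phi(h)\Phi(-h)$ yields exactly the integrand of (\ref{eq:iv}).

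For (\ref{eq:imad}), monotonicity of $\Phi((\epsilon - \cdot)/\sigma_n)$ in its argument gives $\med_{f\cond D_t}\tildepiabc^f(\Btheta) = \pi(\Btheta)\Phi(A)$, and $\Phi(A-BZ) - \Phi(A)$ has the same sign as $-Z$ when $B > 0$. Consequently
\begin{equation*}
\mean|\Phi(A-BZ) - \Phi(A)| = \int_0^\infty [\Phi(A+Bz) - \Phi(A-Bz)]\phi(z)\,\ud z = \prob(Z > 0,\, |Y - A| \leq BZ),
\end{equation*}
for an auxiliary $Y \sim \Normal(0,1)$ independent of $Z$. I would diagonalise via $U = (Y-BZ)/\sqrt{1+B^2}$ and $V = (Y+BZ)/\sqrt{1+B^2}$, jointly $\Normal(0,1)$ with correlation $\rho' = (1-B^2)/(1+B^2)$; the event then reduces (up to a measure-zero set) to $\{U \leq h \leq V\}$ with $h = a_t(\Btheta)$. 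Writing $\prob(U \leq h,\, V \geq h) = \Phi(h) - \Phi_2(h,h;\rho')$ and applying (\ref{eq:owen_id}) with $\sqrt{(1-\rho')/(1+\rho')} = B = s_t(\Btheta)/\sigma_n$ gives $2T(a_t(\Btheta),\, s_t(\Btheta)/\sigma_n)$, completing (\ref{eq:imad}).

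The only nontrivial ingredient is the classical identity (\ref{eq:owen_id}) \citep{Owen1956}, which is the workhorse of both computations; everything else is routine Gaussian/Fubini bookkeeping and a linear change of variables. Degenerate points where $s_t(\Btheta) = 0$ are handled by continuous extension of the integrands.
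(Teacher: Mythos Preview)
Your proposal is correct. The overall strategy matches the paper's: reduce each Bayes risk to a pointwise quantity via Fubini, represent that quantity as a bivariate-normal probability, and collapse it using an Owen-$T$ identity. The executions differ slightly. For $\mathcal{L}^{\textnormal{v}}$ the paper simply cites Lemma~3.1 of \citet{Jarvenpaa2018_acq}, whereas you give a clean self-contained derivation via the auxiliary variables $Y_1,Y_2$; this is essentially the same computation made explicit. For $\mathcal{L}^{\textnormal{m}}$ the paper manipulates the integral directly, splitting at $f_{\Btheta}=m_{\Btheta}$ and arriving at the off-diagonal form $\bvn(h,0;\rho)$ with $\rho=s_t/\sqrt{\sigma_n^2+s_t^2}$, then uses the identity $\bvn(h,0;\rho)=T(h,\rho/\sqrt{1-\rho^2})+\tfrac{1}{2}\Phi(h)$. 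You instead interpret the MAD probabilistically as $\prob(Z>0,\,|Y-A|\le BZ)$, rotate to $(U,V)$, and land on the diagonal form $\Phi(h)-\Phi_2(h,h;\rho')$ with $\rho'=(1-B^2)/(1+B^2)$, to which you apply the same identity (\ref{eq:owen_id}) used for $\mathcal{L}^{\textnormal{v}}$. Both routes are equally short; yours has the mild aesthetic advantage of invoking a single Owen identity for both losses, while the paper's avoids the rotation step.
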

%
We call $L_t(\Btheta^*)$ as an \emph{acquisition function}. Expected integrated variance (\eiv{}) and expected integrated MAD\footnote{Mean absolute deviation (around median).} (\eimad{}) acquisition functions, denoted $L_t^{\textnormal{v}}(\Btheta^*)$ and $L_t^{\textnormal{m}}(\Btheta^*)$, respectively, can be computed as follows: 
\begin{proposition} \label{prop:eimad}
Consider the GP model in Section \ref{sec:post_babc}.
The \eiv{} and \eimad{} acquisition functions are
\begin{align}
L_t^{\textnormal{v}}(\Btheta^*) \!&=\! 2\!\int_{\Theta}\!\pi^2(\Btheta)\!\Bigg[ T\!\left( \!a_t(\Btheta), \!\frac{\sqrt{\sigma_n^2\!+\!s_t^2(\Btheta) \!-\! \Deltav_t(\Btheta;\Btheta^*)}}{\sqrt{\sigma_n^2\!+\!s_t^2(\Btheta)\! +\! \Deltav_t(\Btheta;\Btheta^*)}}\right) \nonumber \\ 
%
&\myquad-T\!\left( \!a_t(\Btheta), \frac{\sigma_n}{\sqrt{\sigma_n^2\!+\!2s_t^2(\Btheta)}}\right)\!\Bigg]\!\ud\Btheta,
\label{eq:eiv}\\ 
%
L_t^{\textnormal{m}}(\Btheta^*) \!&=\! 2\!\int_{\Theta}\!\pi(\Btheta)T\!\left(\! a_t(\Btheta), \!{\frac{\sqrt{s^2_t(\Btheta) \!-\! \Deltav_t(\Btheta;\Btheta^*)}}{\sqrt{\sigma^2_n \!+\! \Deltav_t(\Btheta;\Btheta^*)}}} \right)\!\ud\Btheta, \label{eq:eimad}
\end{align}
respectively, where $a_t(\Btheta)$ is given by (\ref{eq:at}) and
\begin{align}
    \Deltav_{t}(\Btheta;\!\Btheta^*) 
\!=\! c_{t}(\Btheta,\!\Btheta^*)[c_{t}(\Btheta^*\!,\!\Btheta^*) \!+\! \sigma_n^2\Id]^{-1}c_{t}(\Btheta^*\!,\!\Btheta). \label{eq:gp_dvar}
\end{align}
\end{proposition}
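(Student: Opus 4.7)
The plan is to reduce both batch acquisition functions to pointwise expectations over the one-dimensional predictive distribution of the updated posterior mean. By Fubini/Tonelli I swap $\mean_{\BDelta^*\cond\Btheta^*,D_t}$ with $\int_{\Theta}\ud\Btheta$ inside the Bayes-risk formulas of Lemma \ref{lemma:imad} applied to the hypothetical posterior $\probmeas_{D_t\cup D^*}^f$. The essential distributional fact, obtained from the Gaussian conditioning formulas behind (\ref{eq:gp_mean})--(\ref{eq:gp_cov}), is that $s^2_{t+b}(\Btheta) = s^2_t(\Btheta) - \Deltav_t(\Btheta;\Btheta^*)$ does not depend on $\BDelta^*$, whereas $m_{t+b}(\Btheta)$ is affine in $\BDelta^*$ and, under the predictive law $\BDelta^*\cond\Btheta^*,D_t\sim\Normal(m_t(\Btheta^*),c_t(\Btheta^*,\Btheta^*)+\sigma_n^2\Id)$, satisfies $m_{t+b}(\Btheta)\sim\Normal(m_t(\Btheta),\Deltav_t(\Btheta;\Btheta^*))$.

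For EIV I would apply the law of total variance pointwise to $\Phi((\epsilon-f(\Btheta))/\sigma_n)$: the total variance under $f\cond D_t$ is given by Lemma \ref{lemma:imad}, and subtracting the variance of the conditional mean yields the desired $\mean_{\BDelta^*}\Var_{f\cond D_{t+b}}[\,\cdot\,]$. The variance of the conditional mean equals $\Var_M[\Phi((\epsilon-M)/\sqrt{\sigma_n^2+s^2_{t+b}(\Btheta)})]$ with $M\sim\Normal(m_t(\Btheta),\Deltav_t(\Btheta;\Btheta^*))$, which is itself of the form handled by Lemma \ref{lemma:imad} after identifying $\sigma_n^2+s^2_{t+b}$ with the noise variance and $\Deltav_t$ with the signal variance. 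Using $s^2_{t+b}+\Deltav_t=s^2_t$ one checks that the standardized argument collapses back to $a_t(\Btheta)$ and the Owen's T argument becomes $\sqrt{\sigma_n^2+s^2_t-\Deltav_t}/\sqrt{\sigma_n^2+s^2_t+\Deltav_t}$. Multiplying the resulting difference by $\pi^2(\Btheta)$ and integrating over $\Theta$ yields (\ref{eq:eiv}).

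For EIMAD I would use the integral representation $T(h,\alpha)=\frac{1}{2\pi}\int_0^\alpha (1+x^2)^{-1}\exp(-h^2(1+x^2)/2)\ud x$ and interchange $\mean_M$ with the $x$-integral. The remaining expectation $\mean_M[\exp(-(1+x^2)(\epsilon-M)^2/(2(\sigma_n^2+s^2_{t+b})))]$ is a standard Gaussian integral in $M$, and after using $s^2_{t+b}+\Deltav_t=s^2_t$ its denominator simplifies to $\sigma_n^2+s^2_t+x^2\Deltav_t$. The substitution $y = x\sqrt{\sigma_n^2+s^2_{t+b}}/\sqrt{\sigma_n^2+s^2_t+x^2\Deltav_t}$ then maps the upper limit $s_{t+b}/\sigma_n$ to $\sqrt{s^2_t-\Deltav_t}/\sqrt{\sigma_n^2+\Deltav_t}$ and, after computing the Jacobian, converts the integrand to the canonical Owen's T form evaluated at $h=a_t(\Btheta)$. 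This yields $\mean_M[T(a_{t+b}(\Btheta),s_{t+b}(\Btheta)/\sigma_n)] = T(a_t(\Btheta),\sqrt{s^2_t-\Deltav_t}/\sqrt{\sigma_n^2+\Deltav_t})$, from which (\ref{eq:eimad}) follows after multiplying by $2\pi(\Btheta)$ and integrating.

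The main obstacle I expect is the change of variables in the EIMAD derivation: the substitution $y(x)$ must be chosen so that the upper limit, the exponent, and the Jacobian collapse \emph{simultaneously} into Owen's T at argument $a_t(\Btheta)$, and this works only because of the identity $s^2_{t+b}+\Deltav_t=s^2_t$. The EIV part, by contrast, is essentially a bookkeeping application of the law of total variance layered on top of Lemma \ref{lemma:imad}.
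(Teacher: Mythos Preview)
Your proposal is correct on both parts, though your EIMAD route differs from the paper's.

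For EIV, the paper merely cites external lookahead lemmas and a previous proposition; your law-of-total-variance argument is essentially the content behind those citations, made self-contained. The key observation that the variance of the conditional mean, $\Var_M[\Phi((\epsilon-M)/\sqrt{\sigma_n^2+s^2_{t+b}})]$, is again of the exact form handled by Lemma~\ref{lemma:imad} (with noise and signal variances re-identified) is what makes the $\Phi(a)\Phi(-a)$ terms cancel and leave the difference of two Owen's $T$ values, and you identified this correctly.

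For EIMAD, the paper takes a different path: rather than working with Owen's $T$ via its integral representation, it backtracks to the \emph{intermediate} form from the proof of Lemma~\ref{lemma:imad}, namely the one-sided integral $\int_{-\infty}^0 \Phi((\epsilon-m_{t+b}-y)/\sigma_n)\Normal(y\cond 0,s^2_{t+b})\ud y$, and takes $\mean_{\BDelta^*}$ under that integral. The expectation of the $\Phi$-term is then handled by the identity (\ref{eq:helpformula1}), which simply replaces $\sigma_n$ by $\sqrt{\sigma_n^2+\Deltav_t}$; the remaining steps are a verbatim rerun of the second half of Lemma~\ref{lemma:imad}'s proof (bivariate normal cdf $\to$ Owen's $T$). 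In effect, the paper absorbs the $\BDelta^*$-average into the noise variance at the $\Phi$-level and reuses existing machinery, avoiding any change of variables. Your approach stays at the Owen's-$T$ level throughout and pays for this with the substitution $y=x\sqrt{\sigma_n^2+s^2_{t+b}}/\sqrt{\sigma_n^2+s^2_t+x^2\Deltav_t}$; it is more computational but does not require reopening the proof of Lemma~\ref{lemma:imad}. Both routes hinge on $s^2_{t+b}+\Deltav_t=s^2_t$, and your substitution does make the exponent, Jacobian, and upper limit collapse simultaneously---the factorisation $\sigma_n^2(\sigma_n^2+s_t^2)+s_{t+b}^2\Deltav_t=(\sigma_n^2+\Deltav_t)(\sigma_n^2+s_{t+b}^2)$ is the algebraic content of the upper-limit check.
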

This result generalizes \eiv{} in \citet{Jarvenpaa2018_acq} to the batch setting. The proofs are given in \appe{} \ref{appsubsec:proofs}.

\subsection{DETAILS ON COMPUTATION} \label{subsec:opt}

Finding the one-batch-ahead optimal design $\Btheta\opt$ requires global optimisation over $\Theta^b$ for both \eiv{} and \eimad{}. As this is infeasible with large batch size $b$ and/or the dimension $p$ of $\Btheta$, we use greedy optimisation: For $r=1,\ldots,b$, the $r$th point $\Btheta_r\opt$ in the batch is chosen by optimising $L_t([\Btheta^*_1,\ldots,\Btheta^*_r])$ with respect to $\Btheta^*_r$ when the earlier points $\Btheta^*_1,\ldots,\Btheta^*_{r-1}$ are kept fixed to their already determined values. This simplifies the $pb$-dimensional optimisation problem to a sequence of easier $p$-dimensional problems. Similar techniques have been used in batch BO, see~\citet{Ginsbourger2010,Snoek2012,Wilson2018}. 
%
Theory of submodular optimisation has been used to study greedy batch designs \cite{Bach2013,Wilson2018,Jarvenpaa2019_sl}. Unfortunately, such analysis hardly extends to our case because the acquisition functions in Proposition \ref{prop:eimad} depend on $\Btheta^*$ in a rather complex way. 
Using the facts that $T(h,a)$ is non-decreasing for $a\geq 0$ 
and $\Deltav_t(\Btheta;\Btheta^*)$ cannot decrease as more points are included to $\Btheta^*$, we nevertheless see that both \eiv{} and \eimad{} are non-increasing as set functions of $\Btheta^*$. We can thus expect the greedy optimisation to be useful in practice as is seen empirically in Section \ref{sec:experiments}. 

Another potential computational difficulty is the integration over $\Theta$ in (\ref{eq:eiv}) and (\ref{eq:eimad}). Many state-of-the-art BO methods, such as \citet{Hennig2012,HernandezLobato2014,Wu2016}, also require similar computations. We approximate the integral using numerical integration for $p\leq 2$ and self-normalised importance sampling (IS), where the current loss function interpreted as an unnormalised density is the instrumental distribution, for $p>2$. 
Full details and the pseudocode of our algorithm can be found in \appe{} \ref{app:implementation}.

\subsection{HEURISTIC BASELINE BATCH METHODS} \label{subsec:heuristic}

We consider also heuristic acquisition functions which evaluate where the pointwise uncertainty of $\tildepiabc^f(\Btheta)$ is highest. Such intuitive strategies are also known as \emph{uncertainty sampling} and used e.g.~by \citet{Gunter2014,Jarvenpaa2018_acq,Chai2019}. When the variance is used as the measure of the uncertainty of $\tildepiabc^f(\Btheta)$, we call the method as \maxv{}. When MAD is used, we obtain an alternative strategy called analogously \maxmad{}.  
The resulting acquisition functions can be computed using the integrands of (\ref{eq:iv}) and (\ref{eq:imad}).

Finally, we propose a heuristic approach, also used for batch BO \cite{Snoek2012}, to parallellise \maxv{} and \maxmad{} strategies: The first point in the batch is chosen as in the sequential case. The other points are iteratively selected as the locations where the expected pointwise variance (or MAD) of $\tildepiabc^f(\Btheta)$, taken with respect to the discrepancy values of the pending points (i.e.~points that have been already chosen to the current batch) is highest. The resulting acquisition functions are immediately obtained as the integrands of (\ref{eq:eiv}) and (\ref{eq:eimad}). 

\section{UNCERTAINTY QUANTIFICATION OF THE ABC POSTERIOR} \label{sec:post_uncertainty}

Pointwise marginal uncertainty of the unnormalised ABC posterior $\tildepiabc^f$ was used in previous section for selecting the simulation locations adaptively. However, knowing the value of $\tildepiabc^f$ and its marginal uncertainty in some individual $\Btheta$-values is not very helpful for summarising and understanding the accuracy of the final estimate of the ABC posterior. 
Computing the distribution of the moments and marginals of the normalised ABC posterior $\piabc^f$ in (\ref{eq:mod_based_abc_post}) is clearly more intuitive. See Fig.~\ref{fig:post_uncertainty} for a 1D demonstration of this approach. 

\begin{figure}[hbt!] 
\centering
\includegraphics[width=0.42\textwidth]{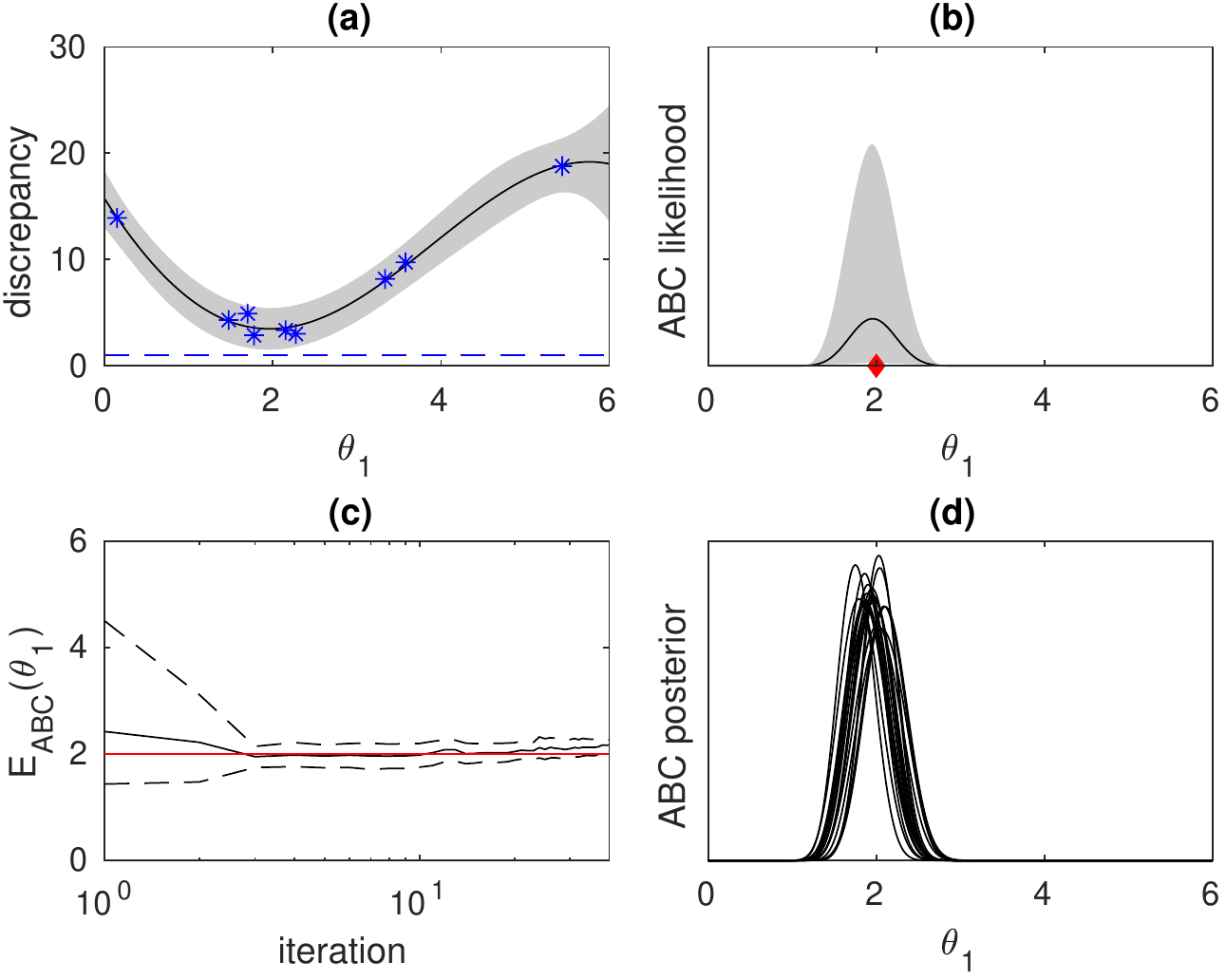}
\caption{Demonstration of ABC posterior uncertainty quantification using Lorenz model from Section \ref{subsec:simmodels} with parameter $\theta_2$ fixed. (a) GP model for $\Delta_{\theta_1}$ (blue dashed line $\epsilon$, blue stars $9$ discrepancy evaluations), (b) uncertainty of unnormalised ABC posterior $\tildepiabc^f$, (c) evolution of model-based ABC posterior expectation (black line) and its 95\% CI (dashed black) for $40$ iterations, (d) uncertainty of ABC posterior $\piabc^f$ corresponding (b).} \label{fig:post_uncertainty}
\end{figure}

To access the posterior of $\piabc^f$, one could fix a sample path $f^{(i)}\sim\probmeas_{D_t}^f$, then use it to fix a realisation of the ABC posterior $\piabc^{f^{(i)}}$ using (\ref{eq:mod_based_abc_post}) and finally use e.g.~MCMC to sample from $\piabc^{f^{(i)}}$. This would be repeated $s$ times and the resulting set of samples $\{\{\Btheta^{(i,j)}\}_{j=1}^{n}\}_{i=1}^{s}$ (where $n$ is the length of the MCMC chain for each posterior realisation $i=1,\ldots,s$) approximately describes the posterior of $\piabc^f$ given $D_t$ (see Fig.~\ref{fig:post_uncertainty}d). 
The uncertainty of the GP hyperparameters $\Bpsi$ could also be taken into account by drawing $\Bpsi^{(i)}\sim \pi(\Bpsi\cond D_t)$ as the very first step but we here consider $\Bpsi$ as known for simplicity although this can cause underestimation of the uncertainty of $\piabc^f$.
The outlined approach involves a major computational challenge as evaluating the $s$ sample paths at $n$ distinct sets of test points scales\footnote{Approximations such as random Fourier features (RFF) \cite{Rahimi2008rff} and those by \citet{Pleiss2018} can be used to reduce this cost, e.g.~\citet{HernandezLobato2014,Wang2017} used RFF to approximately optimise GP sample paths. However, this produces tradeoff between exact GP but small $n$ vs.~inexact GP but large $n$ which we do not analyse in this work. 
} as $\bigO(s(nt^2 + tn^2)+sn^3)$.
%

We propose the following computationally cheaper 
approach: 
In small dimensions, when $p\leq 2$, we evaluate each sample path $f^{(i)},i=1,\ldots,s$ at $\bar{n}^p$ fixed grid points and compute the required integrations numerically. This approach scales as $\bigO(\bar{n}^pt^2 + \bar{n}^{2p}(t+s) + \bar{n}^{3p})$.
If $p>2$, then self-normalised importance sampling is used. We draw $n$ samples from an instrumental density, defined so that its unnormalised pdf at $\Btheta$ equals the $\alpha$-quantile of $\tildepiabc^f(\Btheta)$. This is computed using (\ref{eq:quantile}) of \appe{} and we use $\alpha= 0.95$. The samples are thinned and the resulting $\tilde{n}\ll n$ representative samples $\{\Btheta^{(j)}\}_{j=1}^{\tilde{n}}$ are used to compute the normalised importance weights $\omega^{(i,j)}$ for each sampled posterior $i=1,\ldots,s$. The output is a set of weighted sample sets $\{\{(\omega^{(i,j)},\Btheta^{(j)})\}_{j=1}^{\tilde{n}}\}_{i=1}^{s}$ from which moments and marginal densities can be computed using standard Monte Carlo estimators for each $i=1,\ldots,s$. This approach requires only one MCMC sampling from the instrumental density which scales as $\bigO(nt^2)$, i.e.~only linearly with respect to $n$, so that $n$ can be large. Total cost is $\bigO((n+\tilde{n})t^2 + \tilde{n}^2(t+s) + \tilde{n}^3)$. 
%

This approach has nevertheless some limitations: The computations are only approximate because $\tilde{n}$ and $s$ are finite. 
Also, if the uncertainty of $\piabc^f$ is substantial, choosing a good instrumental density can be difficult. This is because some of the sampled posteriors are then necessarily quite different from any single instrumental density producing possibly poor approximation. In our experiments this however happened only with early iterations and can be detected e.g.~by monitoring the distribution of effective sample sizes for $i=1,\ldots,s$. 
In Section \ref{sec:experiments} we demonstrate that the uncertainty quantification is still feasible and beneficial for low-dimensional cases. 
The proposed approach also works with other GP modelling situations such as \citet{Jarvenpaa2019_sl}.

\section{ON RELATED GP-BASED METHODS} \label{sec:theory}

In this section we briefly discuss the relation between \babc{}, BQ and BO to facilitate better understanding of these conceptually similar inference methods.

\subsection{RELATION TO BAYESIAN QUADRATURE} \label{subsec:bq}

In Bayesian quadrature one aims to compute integral $I_f \eqdef \int_{\reals^p}f(\Btheta)\pi(\Btheta)\ud \Btheta$, where $f:\reals^p\rightarrow\reals$ is an expensive black-box function and $\pi(\Btheta)$ is a known density, e.g.~Gaussian. If a GP prior is placed on $f$, given some evaluations $\{(f_i,\Btheta_i)\}_{i=1}^t$ where $f_i=f(\Btheta_i)$, the posterior of $I_f$, describing one's knowledge of the value of this integral, is Gaussian whose mean and variance can be computed analytically for some choices of $k(\Btheta,\Btheta')$ and $\pi(\Btheta)$ (for details, see \citet{OHagan1991,Briol2019}). 
Also, BQ methods for computing integrals of the form $I_f^g \eqdef \int_{\Theta}g(f(\Btheta))\pi(\Btheta)\ud\Btheta$ with some known (non-negative) function $g:\reals\rightarrow\reals_+$, such as marginal likelihoods, have been developed by \citet{Osborne2012,Gunter2014,Chai2019}. 

Our approach in Section \ref{sec:post_uncertainty} is instead developed for quantifying the uncertainty in either the whole function $\piabc^f:\Theta\rightarrow\reals_+$, which we here write as 
\begin{align}
\piabc^f(\Btheta) = \frac{g(f(\Btheta)) \pi(\Btheta)}
{\int_{\Theta} g(f(\Btheta')) \pi(\Btheta') \ud \Btheta'}, \label{eq:mod_based_gen_post}
\end{align}
or some corresponding moments such as the expectation $\int_{\Theta}\Btheta\,\piabc^f(\Btheta)\ud\Btheta\in\reals^p$. 
To our knowledge, computation of these quantities probabilistically has not been considered before. 
In particular, we used the ``0-1 kernel'' $\indic_{\Delta_{\Btheta}\leq\epsilon}$ in (\ref{eq:abc_post}) corresponding to $g(f(\Btheta)) = \Phi((\epsilon-f(\Btheta))/\sigma_n)$ in (\ref{eq:mod_based_gen_post}). 
\citet{Osborne2012,Gutmann2016,Acerbi2018,Jarvenpaa2019_sl} instead modelled the log-likelihood with GP to reckon the non-negativity of the likelihood and the high dynamic range of the log-likelihood. This would correspond to $g(f(\Btheta)) = \exp(f(\Btheta))$ in (\ref{eq:mod_based_gen_post}). 


\citet{Osborne2012,Gunter2014,Chai2019} used linearisation approximations in their algorithms for estimating integrals of the form $I_f^g$. 
Similarly, if both $\Phi(\cdot)$-terms in our case in (\ref{eq:mod_based_abc_post}) were linear for $f$, then the numerator and denominator in (\ref{eq:mod_based_abc_post}) would have joint Gaussian density leading to tractable computations. However, we observed that the resulting densities can be highly non-Gaussian so that any linearisation approach can result poor quality approximations. 
For this reason we considered simulation-based approach in Section \ref{sec:post_uncertainty}.

\subsection{RELATION TO BAYESIAN OPTIMISATION} \label{subsec:bo}

Suppose now $f:\Theta\subset\reals^p\rightarrow\reals$ is an expensive, black-box function to be minimised. In BO, a GP prior is placed on $f$ and the future locations for obtaining (possibly noisy) evaluations of $f$ are chosen adaptively by optimising an acquisition function that, in some sense, measures the potential improvement in the knowledge of the minimum point $\Btheta^{\star} \eqdef \arg\min_{\Btheta\in\Theta}f(\Btheta)$ or the corresponding function value $f^{\star} \eqdef \min_{\Btheta\in\Theta}f(\Btheta)$ brought by the extra evaluation. 
%
For example, (predictive) entropy search \cite{Hennig2012,HernandezLobato2014} use an acquisition function that measures the expected reduction in the differential entropy of the posterior of $\Btheta^{\star}$. \citet{Wang2017} similarly considered the posterior of $f^{\star}$. 
The important difference between these methods (or BO in general) and \babc{} is that the quantity of interest in \babc{} is not the minimiser of $f$ but the full ABC posterior density $\piabc^f$ (or $\tildepiabc^f$). 
Also, BO is rarely introduced this way in literature, simple acquisition functions such as the expected improvement and lower confidence bound (LCB) are often used and the posterior of $\Btheta^{\star}$ or $f^{\star}$ is rarely considered. 

In the \emph{BOLFI} framework \cite{Gutmann2016}, the function $f$ was however taken to be the ABC discrepancy $\Delta_{\Btheta}$, and LCB acquisition function $\textnormal{LCB}(\Btheta) = m_t(\Btheta) - \beta_t s_t(\Btheta)$ \cite{Srinivas2010} was used for illustrating their approach of learning the ABC posterior. This is reasonable because to learn the ABC posterior one needs to evaluate in the regions with small discrepancy. 
We have the following new result that relates LCB to the \babc{} framework:
\begin{proposition} \label{prop:lcbconnection}
If the prior is uniform over $\Theta$ (and may be improper), i.e.~if $\pi(\Btheta)\propto \indic_{\Btheta\in\Theta}$, then the point chosen by the LCB acquisition function with parameter $\beta_t$ is the same as the point maximising the $\Phi(\beta_t)$-quantile of the unnormalised ABC posterior $\tildepiabc^f(\Btheta)$ for any $\epsilon$. 
\end{proposition}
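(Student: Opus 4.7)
The plan is to use the pointwise Gaussianity of the GP posterior together with the fact that $\tildepiabc^f(\Btheta) = \pi(\Btheta)\Phi((\epsilon-f(\Btheta))/\sigma_n)$ is, for each fixed $\Btheta$, a strictly monotone function of the scalar random variable $f(\Btheta)$. Since monotone transformations of a random variable map quantiles to quantiles, I can reduce the quantile of $\tildepiabc^f(\Btheta)$ to an explicit closed-form expression in $m_t(\Btheta)$ and $s_t(\Btheta)$, and then compare with $\mathrm{LCB}(\Btheta) = m_t(\Btheta) - \beta_t s_t(\Btheta)$.

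Concretely, I would first recall that under $\probmeas_{D_t}^f$ we have $f(\Btheta)\sim\Normal(m_t(\Btheta),s_t^2(\Btheta))$, so its $\gamma$-quantile is $m_t(\Btheta) + \Phi^{-1}(\gamma) s_t(\Btheta)$. Next, because $u \mapsto \Phi((\epsilon-u)/\sigma_n)$ is strictly decreasing in $u$ and $\pi(\Btheta)\geq 0$ is constant in the randomness, the $\alpha$-quantile of $\tildepiabc^f(\Btheta)$ equals $\pi(\Btheta)\Phi((\epsilon - q_{1-\alpha}(f(\Btheta)))/\sigma_n)$, where $q_{1-\alpha}$ denotes the $(1-\alpha)$-quantile of the Gaussian marginal. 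Setting $\alpha = \Phi(\beta_t)$ and using $\Phi^{-1}(1-\Phi(\beta_t)) = -\beta_t$ then gives
\begin{equation*}
q_{\Phi(\beta_t)}\!\bigl(\tildepiabc^f(\Btheta)\bigr)
= \pi(\Btheta)\,\Phi\!\left(\frac{\epsilon - \bigl(m_t(\Btheta) - \beta_t s_t(\Btheta)\bigr)}{\sigma_n}\right)
= \pi(\Btheta)\,\Phi\!\left(\frac{\epsilon - \mathrm{LCB}(\Btheta)}{\sigma_n}\right).
\end{equation*}

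Finally, I would substitute the uniform prior $\pi(\Btheta)\propto\indic_{\Btheta\in\Theta}$, so that on $\Theta$ this quantile is proportional to $\Phi((\epsilon-\mathrm{LCB}(\Btheta))/\sigma_n)$. Since $\Phi$ is strictly increasing and the constants $\epsilon,\sigma_n>0$ do not depend on $\Btheta$, maximising this over $\Btheta\in\Theta$ is equivalent to minimising $\mathrm{LCB}(\Btheta)$, which is exactly the BOLFI/LCB rule. The argument never uses the value of $\epsilon$, so the equivalence of argmaxima holds for any $\epsilon\in\reals$, yielding the claim.

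There is essentially no obstacle beyond bookkeeping with the quantile transformation; the only care-point is the direction reversal $\alpha \leftrightarrow 1-\alpha$ caused by the decreasing map $u\mapsto \Phi((\epsilon-u)/\sigma_n)$, together with the symmetry identity $\Phi^{-1}(1-\Phi(\beta_t)) = -\beta_t$, which turns the LCB sign into the correct $-\beta_t s_t(\Btheta)$ contribution.
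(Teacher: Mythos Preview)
Your proof is correct and follows essentially the same route as the paper: both reduce the problem to the closed-form $\alpha$-quantile $z_{t,\alpha}(\Btheta)=\pi(\Btheta)\Phi\bigl((s_t(\Btheta)\Phi^{-1}(\alpha)-m_t(\Btheta)+\epsilon)/\sigma_n\bigr)$ and then observe that, under a uniform prior, maximising it in $\Btheta$ is equivalent to minimising $m_t(\Btheta)-\Phi^{-1}(\alpha)s_t(\Btheta)$, i.e.\ the LCB rule with $\beta_t=\Phi^{-1}(\alpha)$. The only cosmetic difference is that the paper quotes this quantile formula from \citet{Jarvenpaa2018_acq}, whereas you derive it yourself via the monotone-transformation-of-quantiles argument (including the $\alpha\leftrightarrow 1-\alpha$ flip from the decreasing map); the two derivations yield the identical expression and conclusion.
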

This result gives an interpretation for the LCB tradeoff parameter $\beta_t$ in the ABC setting. 
However, instead of using LCB for \babc{}, it is clearly more reasonable to evaluate where the variance (or some other measure of uncertainty) is large as already discussed e.g.~by \citet{Kandasamy2015,Jarvenpaa2018_acq}. \citet{Jarvenpaa2018_acq} showed empirically that \eiv{} consistently works better than LCB in their sequential scenario when the goal is to learn the ABC posterior. 
For this reason, we do not use (batch) BO methods in this article.

\section{EXPERIMENTS} \label{sec:experiments}

\begin{figure*}[htbp!] 
\centering
\includegraphics[width=0.9\textwidth]{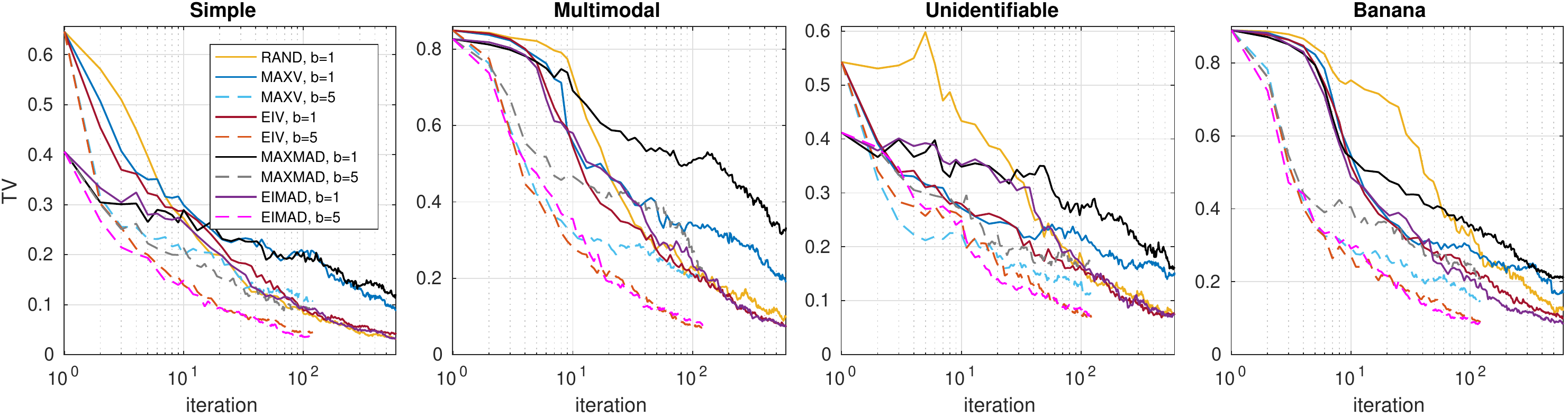}
\caption{Results for the 2D toy simulation models over $600$ iterations and two batch sizes $b$.} \label{fig:synth1}
\end{figure*}

\begin{figure*}[htbp!] 
\centering
\includegraphics[width=0.9\textwidth]{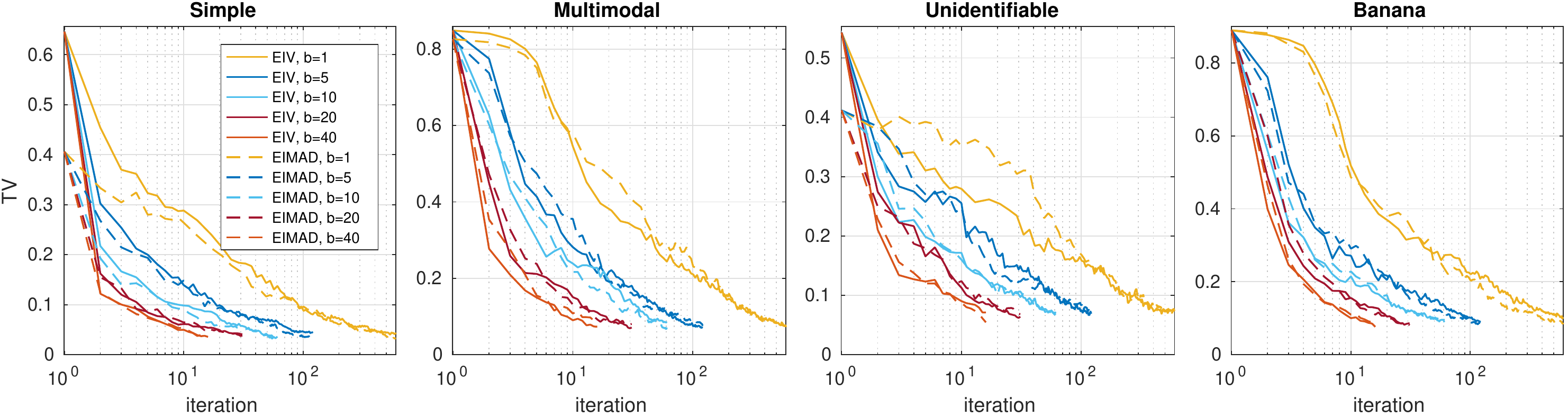}
\caption{Results for 2D toy simulation models with two acquisition functions and various batch sizes.} \label{fig:synth_batch}
\end{figure*}

We first consider four 2D toy problems to see how the proposed method performs with a well-specified GP model. We then focus on more typical scenarios where the GP modelling assumptions do not hold exactly using three real-world simulation models. We compare the performance of the sequential and synchronous batch versions of the acquisition methods of Section \ref{sec:parallel}. As a simple baseline, we consider random points drawn from the prior (abbreviated as RAND).
%
%
We also briefly demonstrate the uncertainty quantification of the ABC posterior. 
We do not consider 
synthetic likelihood method (as e.g.~in \citet{Jarvenpaa2019_sl}) because it requires hundreds of evaluations for each proposed parameter and is thus not applicable here. 
For similar reason, we do not consider sampling-based ABC methods. 

Locations for fitting the initial GP model are sampled from the uniform prior in all cases. We take $10$ initial points for 2D and $20$ for 3D and 4D cases.  
We use $\Bb=\Bzeros$, $B_{ij}=10^2\indic_{i=j}$ and include basis functions of the form $1, \theta_i, \theta_i^2$. 
The discrepancy $\Delta_{\Btheta}$ is assumed smooth and we use the squared exponential covariance function $k(\Btheta,\Btheta') = \sigma_{\lik}^2 \exp(-\half\sum_{i=1}^p(\theta_i-\theta_i')^2/l_i^2)$. 
GP hyperparameters $\Bpsi=(\sigma_n^2,l_1,\ldots,l_p,\sigma_f^2)$ are given weakly informative priors and their values are obtained using MAP estimation at each iteration. 

ABC-MCMC \cite{Marjoram2003} with extensive simulations is used to compute the ground truth ABC posterior for the real-world models. 
For simplicity and to ensure meaningful comparisons to ground-truth, we fix $\epsilon$ to certain small predefined values although, in practice, its value is set adaptively \cite{Jarvenpaa2018_acq} or based on pilot runs. 
We compute the estimate of the unnormalised ABC posterior using (\ref{eq:pimean}) for \maxv{}, \eiv{}, RAND and (\ref{eq:pimed}) for \maxmad{}, \eimad{}. Adaptive MCMC is used to sample from the resulting ABC posterior estimates and from the instrumental densities needed for the IS approximations. 
TV denotes the median total variation distance between the estimated ABC posterior and the true one (2D) or the average TV between their marginal TV values (3D, 4D) computed numerically over $50$ repeated runs. 
Iteration (i.e.~number of batches chosen) serves as a proxy to wall-time. The number of simulations i.e.~the maximum value of $t$ is fixed in all experiments and the batch methods thus finish earlier. 


\subsection{TOY SIMULATION MODELS} \label{subsec:synth}

Fig.~\ref{fig:synth1} shows the results with sequential methods ($b=1$) and the corresponding batch methods with $b=5$ for four synthetically constructed toy models. These were taken from \citet{Jarvenpaa2018_acq} and are illustrated in the \appe{} \ref{app:implementation}. In Fig.~\ref{fig:synth_batch} the effect of batch size  $b$ is studied for the two best performing methods.

\subsection{REAL-WORLD SIMULATION MODELS} \label{subsec:simmodels}

\textbf{Lorenz model.} This modified version of the well-known Lorenz weather prediction model describes the dynamics of slow weather variables and their dependence on unobserved fast weather variables over a certain period of time. The model is represented by a coupled stochastic differential equation which can only be solved numerically resulting in an intractable likelihood function. The model has two parameters $\Btheta=(\theta_1,\theta_2)$ which we estimate from timeseries data generated using $\Btheta=(2,0.1)$. See \citet{Thomas2018} for full details of the model and the experimental set-up that we also use here, with the exception that we use wider uniform prior $\Btheta\sim\Unif([0,5]\!\times\![0,0.5])$. The discrepancy is formed as a Mahalanobis distance from the six summary statistics by \citet{Hakkarainen2012}. 
The results are shown in Fig.~\ref{fig:lorenz1}(a). 
Furthermore, Fig.~\ref{fig:lorenz1}(b-c) demonstrates the uncertainty quantification of the model-based ABC posterior expectation. 
See \appe{} \ref{app:impl} for the details of the numerical computations used.
The effect of batch size is shown in Fig.~\ref{fig:bact}(c). 

\begin{figure*}[hbtp!] 
\centering
\begin{subfigure}{0.3\textwidth}
\includegraphics[width=\textwidth]{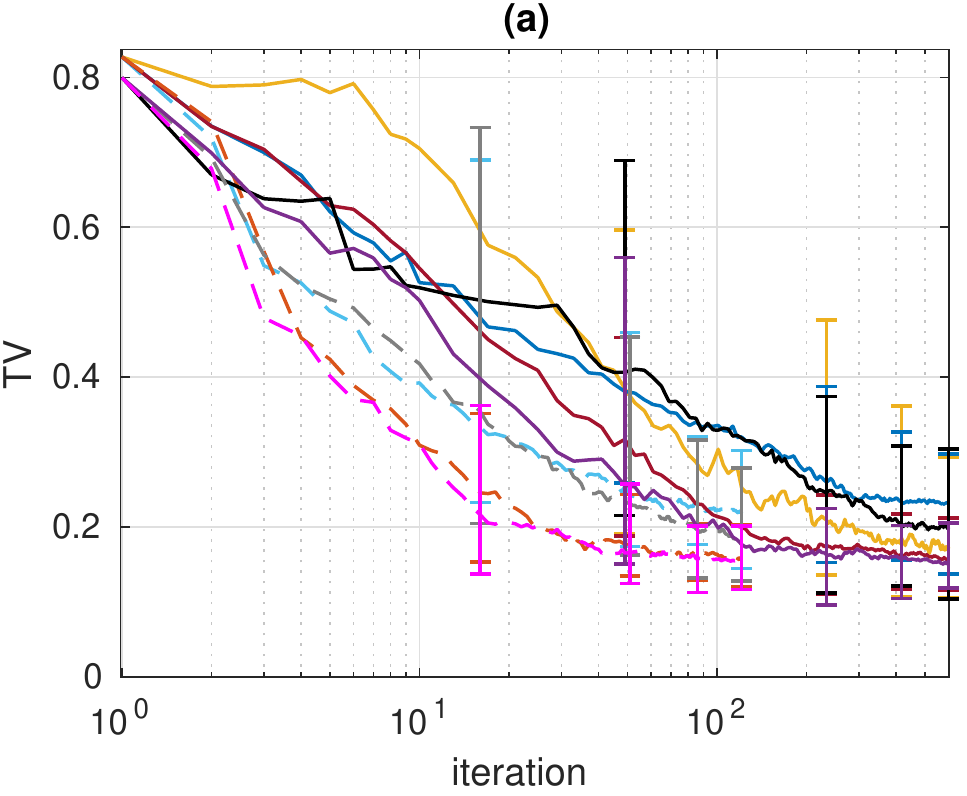}
\end{subfigure}
\hspace{0.17cm}
\begin{subfigure}{0.62\textwidth}
\includegraphics[width=\textwidth]{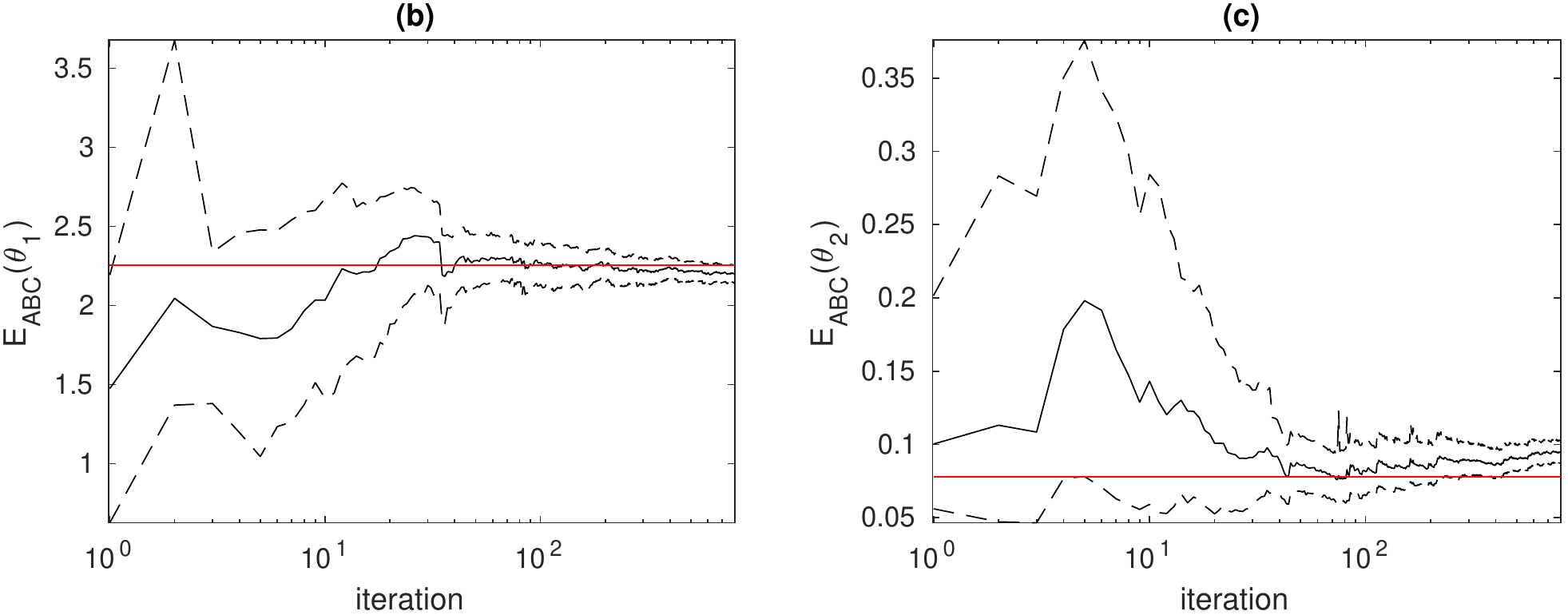}
\end{subfigure}
\caption{(a) Lorenz model. The intervals show the $90\%$ variability. See Fig.~\ref{fig:synth1} for the legend. (b-c) Black line is the mean and dashed black the $95$\% CI of the ABC posterior expectations. Red line shows the true value. } \label{fig:lorenz1}
\end{figure*}

\begin{figure*}[hbtp!] 
\centering
\begin{subfigure}{0.29\textwidth}
\includegraphics[width=\textwidth]{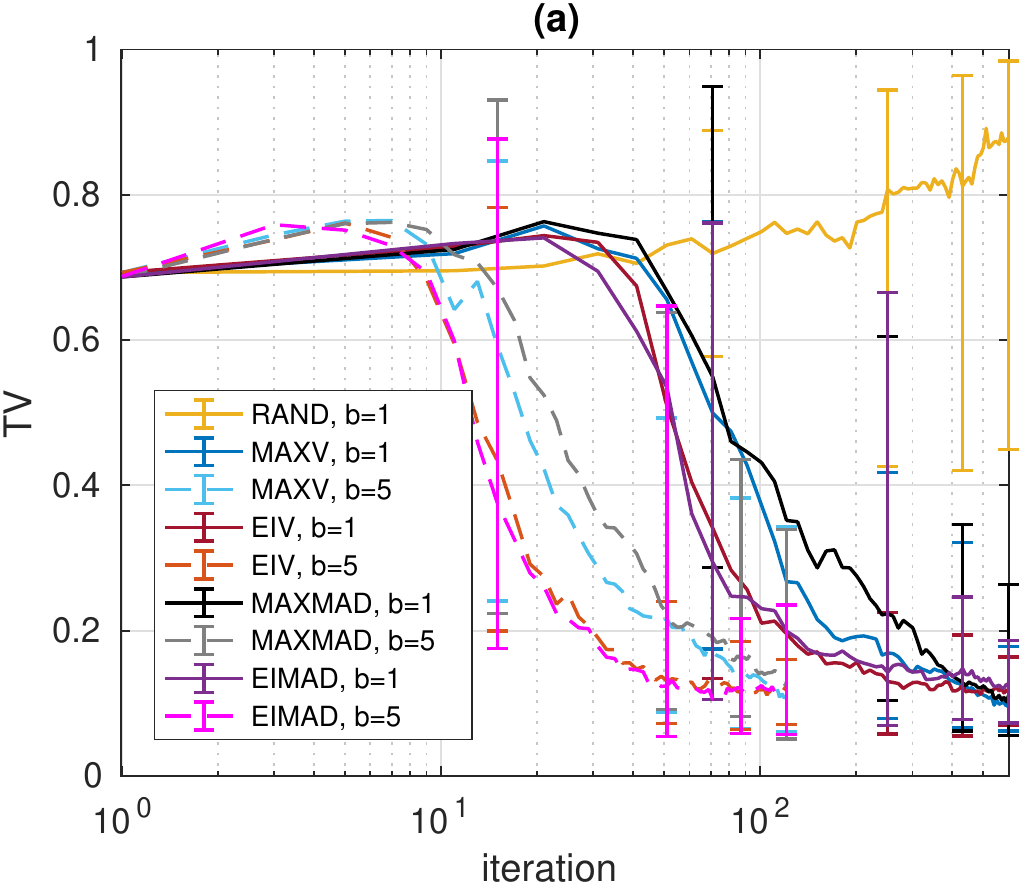}
\end{subfigure}
\begin{subfigure}{0.29\textwidth}
\includegraphics[width=\textwidth]{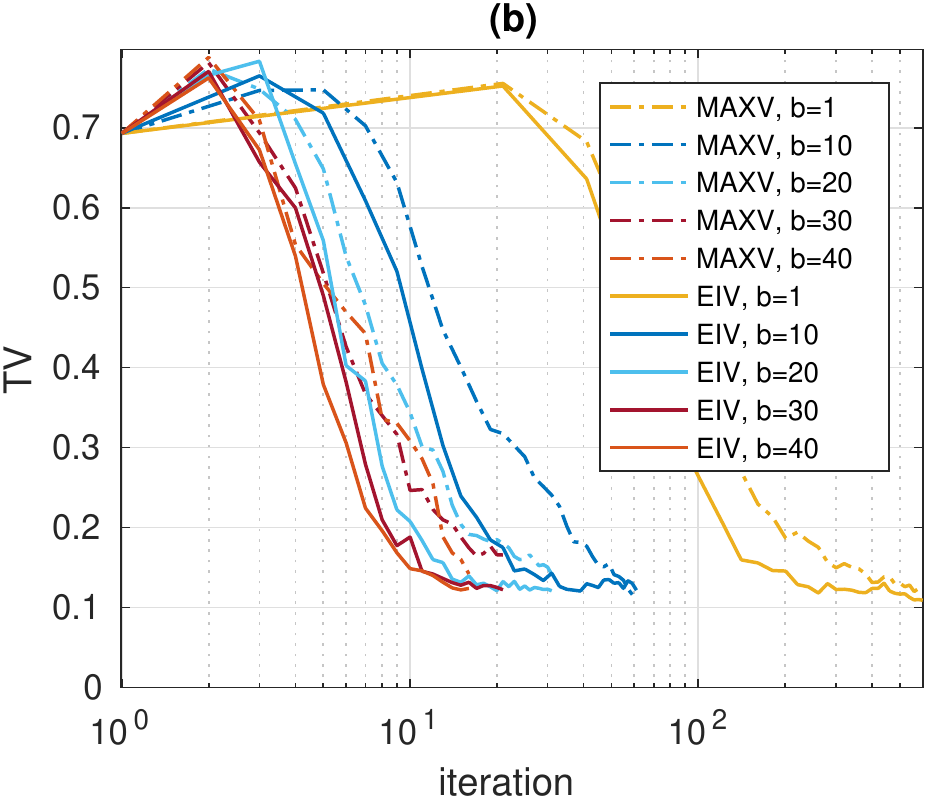}
\end{subfigure}
\hspace{0.05cm}
\begin{subfigure}{0.29\textwidth}
\includegraphics[width=\textwidth]{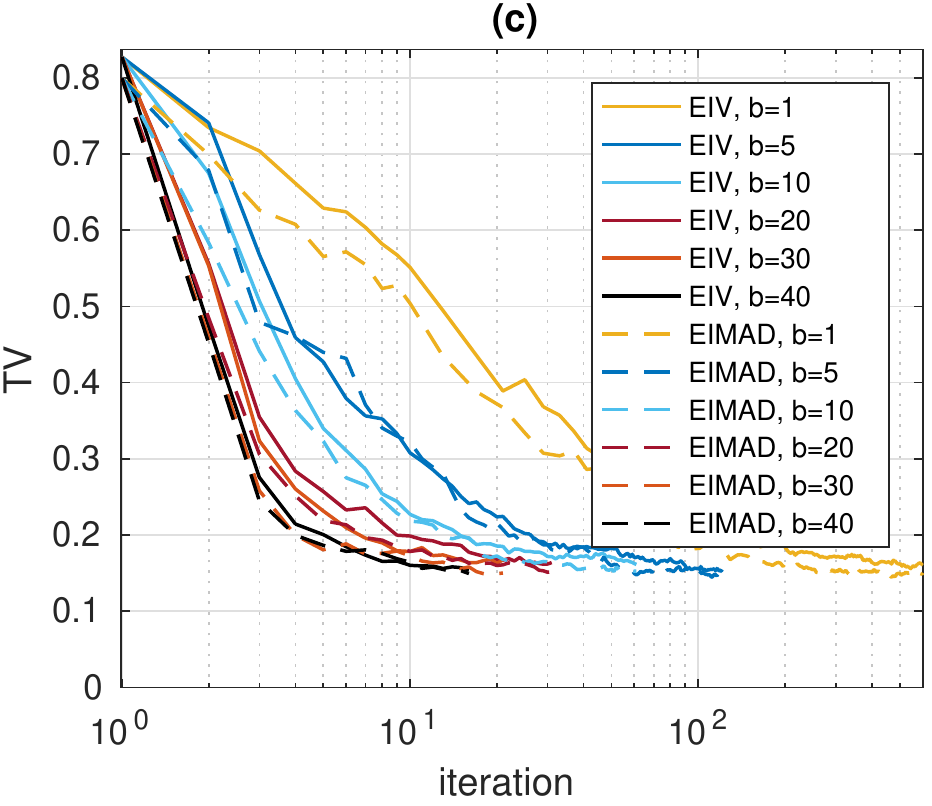}
\end{subfigure}
\caption{(a) Bacterial infections model. The intervals show the $90\%$ variability. (b) Bacterial infections model with different batch sizes and two chosen acquisition methods. (c) Additional experiments with Lorenz model.} \label{fig:bact}
\end{figure*}

\textbf{Bacterial infections model.} This model describes transmission dynamics of bacterial infections in day care centres and features intractable likelihood. The model has been developed by \citet{Numminen2013} and used previously by \citet{Gutmann2016,Jarvenpaa2018_acq} as an ABC benchmark problem. We estimate the internal, external and co-infection parameters $\beta\in[0,11],\Lambda\in[0,2]$ and $\theta\in[0,1]$, respectively, using true data \cite{Numminen2013} and uniform priors. The discrepancy is formed as in \citet{Gutmann2016}, see \appe{} \ref{app:add_exp} for details. 
The results with all methods are shown in Fig.~\ref{fig:bact}(a) and Fig.~\ref{fig:bact}(b) shows the effect of batch size for the two best performing methods.

Additional details, e.g.,~on the optimisation of the acquisition function, MCMC methods used, computational costs, and additional experimental results can be found in the \appe{} \ref{app:implementation} and \ref{app:experiments}. The results for our third, additional ABC benchmark scenario, \textbf{g-and-k model}, are shown in the \appe{} \ref{app:gk}.


\subsection{DISCUSSION ON THE RESULTS} \label{subsec:remarks}

In general, we obtain reasonable posterior approximations considering the very limited budget of simulations. 
\eiv{} and \eimad{} tend to produce more stable, accurate but also more conservative estimates than \maxv{} and \maxmad{}. 
%
Difference in approximation quality between \eiv{} and \eimad{}, both based on the same Bayesian decision theoretic framework but different loss functions, was small. 
%
While RAND worked well in 2D cases and is fully parallellisable, it unsurprisingly produced poor posterior approximations in higher dimensions. 
In all cases, our batch strategies produced similar evaluation locations as the corresponding sequential methods leading to substantial improvements in wall-time when the simulations are costly. 
%
Unlike in the related problem of BO, batch points need not always be diverse because the simulations are stochastic and simulating multiple times at nearby points can be useful. On the other hand, already a single simulation can be enough to effectively rule out large tail regions. The proposed methods automatically balance between these two situations.

Fig.~\ref{fig:lorenz1}(b-c) shows the evolution of the uncertainty in the ABC posterior expectation of the Lorenz model over $800$ iterations in the case of sequential \eiv{}. The convergence is approximately towards the true ABC posterior expectation due to a slight GP misspecification. 
Similarly, the ABC posterior marginals of the bacterial infection model in \appe{} \ref{app:experiments} contain some uncertainty after $600$ iterations which our approach allows to rigorously quantify. 
%
Due to the approximations involved and because this approach is not designed to account for the error due to approximating the intractable ground-truth posterior with the ABC posterior in the first place, we however suggest to interpret the uncertainty estimates with care. 
Developing more effective (analytical) methods for computing these uncertainty estimates is an interesting avenue for future work. The connection to BQ methods outlined in Section \ref{subsec:bq} can be helpful for achieving this goal.

\section{CONCLUSIONS} \label{sec:concl}

We considered ABC inference with a limited number of simulations ($t\!\lesssim\!1000$). We outlined a GP surrogate modelling framework called \babc{} where the uncertainty of the ABC posterior distribution due to the limited computational resources is approximately quantified. 
We also developed batch-sequential Bayesian experimental design strategies to efficiently parallellise the expensive simulations. Experiments suggest that substantial gains in wall-time over previous related work can be obtained. 
%


\subsubsection*{Acknowledgements}

We thank Academy of Finland (grants no.~286607 and 294015 to PM) and Finnish Center for Artificial Intelligence for support of this research. We acknowledge the computational resources provided by Aalto Science-IT project.


\bibliography{refs_final}

\begin{thebibliography}{51}
\providecommand{\natexlab}[1]{#1}
\providecommand{\url}[1]{\texttt{#1}}
\expandafter\ifx\csname urlstyle\endcsname\relax
  \providecommand{\doi}[1]{doi: #1}\else
  \providecommand{\doi}{doi: \begingroup \urlstyle{rm}\Url}\fi

\bibitem[Acerbi(2018)]{Acerbi2018}
L.~Acerbi.
\newblock {Variational Bayesian Monte Carlo}.
\newblock In \emph{Advances in Neural Information Processing Systems 31}, pages
  8223--8233. 2018.

\bibitem[Bach(2013)]{Bach2013}
F.~Bach.
\newblock \emph{Learning with Submodular Functions: A Convex Optimization
  Perspective}.
\newblock 2013.

\bibitem[Beaumont et~al.(2002)Beaumont, Zhang, and Balding]{Beaumont2002}
M.~A. Beaumont, W.~Zhang, and D.~J. Balding.
\newblock {Approximate Bayesian computation in population genetics}.
\newblock \emph{Genetics}, 162\penalty0 (4):\penalty0 2025--2035, 2002.

\bibitem[Bernton et~al.(2019)Bernton, Jacob, Gerber, and Robert]{Bernton2019}
E.~Bernton, P.~E. Jacob, M.~Gerber, and C.~P. Robert.
\newblock Approximate {B}ayesian computation with the {W}asserstein distance.
\newblock \emph{Journal of the Royal Statistical Society: Series B (Statistical
  Methodology)}, 81\penalty0 (2):\penalty0 235--269, 2019.

\bibitem[Briol et~al.(2019)Briol, Oates, Girolami, Osborne, and
  Sejdinovic]{Briol2019}
F.-X. Briol, C.~J. Oates, M.~Girolami, M.~A. Osborne, and D.~Sejdinovic.
\newblock Probabilistic integration: A role in statistical computation?
\newblock \emph{Statistical Science}, 34\penalty0 (1):\penalty0 1--22, 2019.

\bibitem[Chai and Garnett(2019)]{Chai2019}
H.~R. Chai and R.~Garnett.
\newblock Improving quadrature for constrained integrands.
\newblock In \emph{The 22nd International Conference on Artificial Intelligence
  and Statistics}, pages 2751--2759, 2019.

\bibitem[Chaloner and Verdinelli(1995)]{Chaloner1995}
K.~Chaloner and I.~Verdinelli.
\newblock {Bayesian Experimental Design: A Review}.
\newblock \emph{Statistical Science}, 10\penalty0 (3):\penalty0 273--304, 1995.

\bibitem[Ginsbourger et~al.(2010)Ginsbourger, Le~Riche, and
  Carraro]{Ginsbourger2010}
D.~Ginsbourger, R.~Le~Riche, and L.~Carraro.
\newblock \emph{Kriging Is Well-Suited to Parallelize Optimization}, pages
  131--162.
\newblock Springer Berlin Heidelberg, 2010.

\bibitem[Greenberg et~al.(2019)Greenberg, Nonnenmacher, and
  Macke]{Greenberg2019}
D.~Greenberg, M.~Nonnenmacher, and J.~Macke.
\newblock Automatic posterior transformation for likelihood-free inference.
\newblock In \emph{Proceedings of the 36th International Conference on Machine
  Learning}, 2019.

\bibitem[Gunter et~al.(2014)Gunter, Osborne, Garnett, Hennig, and
  Roberts]{Gunter2014}
T.~Gunter, M.~A. Osborne, R.~Garnett, P.~Hennig, and S.~J. Roberts.
\newblock Sampling for inference in probabilistic models with fast {B}ayesian
  quadrature.
\newblock In \emph{Advances in Neural Information Processing Systems 27}. 2014.

\bibitem[Gutmann and Corander(2016)]{Gutmann2016}
M.~U. Gutmann and J.~Corander.
\newblock {Bayesian optimization for likelihood-free inference of
  simulator-based statistical models}.
\newblock \emph{Journal of Machine Learning Research}, 17\penalty0
  (125):\penalty0 1--47, 2016.

\bibitem[Haario et~al.(2006)Haario, Laine, Mira, and Saksman]{Haario2006}
H.~Haario, M.~Laine, A.~Mira, and E.~Saksman.
\newblock {DRAM: Efficient adaptive MCMC}.
\newblock \emph{Statistics and Computing}, 16\penalty0 (4):\penalty0 339--354,
  2006.

\bibitem[Hakkarainen et~al.(2012)Hakkarainen, Ilin, Solonen, Laine, Haario,
  Tamminen, Oja, and Järvinen]{Hakkarainen2012}
J.~Hakkarainen, A.~Ilin, A.~Solonen, M.~Laine, H.~Haario, J.~Tamminen, E.~Oja,
  and H.~Järvinen.
\newblock On closure parameter estimation in chaotic systems.
\newblock \emph{Nonlinear Processes in Geophysis}, 19:\penalty0 127--143, 2012.

\bibitem[Hennig and Schuler(2012)]{Hennig2012}
P.~Hennig and C.~J. Schuler.
\newblock {Entropy Search for Information-Efficient Global Optimization}.
\newblock \emph{Journal of Machine Learning Research}, 13\penalty0
  (1999):\penalty0 1809--1837, 2012.

\bibitem[Hern{\'{a}}ndez-Lobato et~al.(2014)Hern{\'{a}}ndez-Lobato, Hoffman,
  and Ghahramani]{HernandezLobato2014}
J.~M. Hern{\'{a}}ndez-Lobato, M.~W. Hoffman, and Z.~Ghahramani.
\newblock {Predictive Entropy Search for Efficient Global Optimization of
  Black-box Functions}.
\newblock \emph{Advances in Neural Information Processing Systems 28}, pages
  1--9, 2014.

\bibitem[Holden et~al.(2018)Holden, Edwards, and Wilkinson]{Holden2018}
P.~Holden, N.~Edwards, and R.~Wilkinson.
\newblock \emph{{ABC} for climate: dealing with expensive simulators}, pages
  569--95.
\newblock 2018.
\newblock In The Handbook of ABC.

\bibitem[Järvenpää et~al.(2018)Järvenpää, Gutmann, Vehtari, and
  Marttinen]{Jarvenpaa2018_aoas}
M.~Järvenpää, M.~U. Gutmann, A.~Vehtari, and P.~Marttinen.
\newblock Gaussian process modelling in approximate {B}ayesian computation to
  estimate horizontal gene transfer in bacteria.
\newblock \emph{The Annals of Applied Statistics}, 12\penalty0 (4):\penalty0
  2228--2251, 2018.

\bibitem[Järvenpää et~al.(2019)Järvenpää, Gutmann, Pleska, Vehtari, and
  Marttinen]{Jarvenpaa2018_acq}
M.~Järvenpää, M.~U. Gutmann, A.~Pleska, A.~Vehtari, and P.~Marttinen.
\newblock Efficient acquisition rules for model-based approximate {B}ayesian
  computation.
\newblock \emph{Bayesian Analysis}, 14\penalty0 (2):\penalty0 595--622, 2019.

\bibitem[Järvenpää et~al.(2020)Järvenpää, Gutmann, Vehtari, and
  Marttinen]{Jarvenpaa2019_sl}
M.~Järvenpää, M.~U. Gutmann, A.~Vehtari, and P.~Marttinen.
\newblock {Parallel Gaussian process surrogate Bayesian inference with noisy
  likelihood evaluations}.
\newblock \emph{Bayesian analysis, to appear}, 2020.

\bibitem[Kandasamy et~al.(2017)Kandasamy, Schneider, and
  P{\'{o}}czos]{Kandasamy2015}
K.~Kandasamy, J.~Schneider, and B.~P{\'{o}}czos.
\newblock Query efficient posterior estimation in scientific experiments via
  {B}ayesian active learning.
\newblock \emph{Artificial Intelligence}, 243:\penalty0 45--56, 2017.

\bibitem[Marin et~al.(2012)Marin, Pudlo, Robert, and Ryder]{Marin2012}
J.~M. Marin, P.~Pudlo, C.~P. Robert, and R.~J. Ryder.
\newblock {Approximate Bayesian computational methods}.
\newblock \emph{Statistics and Computing}, 22\penalty0 (6):\penalty0
  1167--1180, 2012.

\bibitem[Marjoram et~al.(2003)Marjoram, Molitor, Plagnol, and
  Tavare]{Marjoram2003}
P.~Marjoram, J.~Molitor, V.~Plagnol, and S.~Tavare.
\newblock {Markov chain Monte Carlo without likelihoods.}
\newblock \emph{Proceedings of the National Academy of Sciences of the United
  States of America}, 100\penalty0 (26):\penalty0 15324--8, 2003.

\bibitem[Marttinen et~al.(2015)Marttinen, Gutmann, Croucher, Hanage, and
  Corander]{Marttinen2015}
P.~Marttinen, M.~U. Gutmann, N.~J. Croucher, W.~P. Hanage, and J.~Corander.
\newblock {Recombination produces coherent bacterial species clusters in both
  core and accessory genomes}.
\newblock \emph{Microbial Genomics}, 1\penalty0 (5), 2015.

\bibitem[Meeds and Welling(2014)]{Meeds2014}
E.~Meeds and M.~Welling.
\newblock {GPS-ABC: Gaussian Process Surrogate Approximate Bayesian
  Computation}.
\newblock In \emph{Proceedings of the 30th Conference on Uncertainty in
  Artificial Intelligence}, 2014.

\bibitem[Numminen et~al.(2013)Numminen, Cheng, Gyllenberg, and
  Corander]{Numminen2013}
E.~Numminen, L.~Cheng, M.~Gyllenberg, and J.~Corander.
\newblock Estimating the transmission dynamics of streptococcus pneumoniae from
  strain prevalence data.
\newblock \emph{Biometrics}, 69\penalty0 (3):\penalty0 748--757, 2013.

\bibitem[O'Hagan(1991)]{OHagan1991}
A.~O'Hagan.
\newblock {Bayes-Hermite quadrature}.
\newblock \emph{Journal of Statistical Planning and Inference}, 1991.

\bibitem[O'Hagan and Kingman(1978)]{OHagan1978}
A.~O'Hagan and J.~F.~C. Kingman.
\newblock Curve fitting and optimal design for prediction.
\newblock \emph{Journal of the Royal Statistical Society. Series B
  (Methodological)}, 40\penalty0 (1):\penalty0 1--42, 1978.

\bibitem[Osborne et~al.(2012)Osborne, Duvenaud, Garnett, Rasmussen, Roberts,
  and Ghahramani]{Osborne2012}
M.~A. Osborne, D.~Duvenaud, R.~Garnett, C.~E. Rasmussen, S.~J. Roberts, and
  Z.~Ghahramani.
\newblock {Active Learning of Model Evidence Using Bayesian Quadrature}.
\newblock \emph{Advances in Neural Information Processing Systems 26}, pages
  1--9, 2012.

\bibitem[Owen(1956)]{Owen1956}
D.~B. Owen.
\newblock Tables for computing bivariate normal probabilities.
\newblock \emph{The Annals of Mathematical Statistics}, 27\penalty0
  (4):\penalty0 1075--1090, 1956.

\bibitem[Papamakarios and Murray(2016)]{Papamakarios2016}
G.~Papamakarios and I.~Murray.
\newblock {Fast e-free inference of simulation models with Bayesian conditional
  density estimation}.
\newblock In \emph{Advances in Neural Information Processing Systems 29}, 2016.

\bibitem[Papamakarios et~al.(2019)Papamakarios, Sterratt, and
  Murray]{Papamakarios2018}
G.~Papamakarios, D.~Sterratt, and I.~Murray.
\newblock Sequential neural likelihood: Fast likelihood-free inference with
  autoregressive flows.
\newblock In \emph{Proceedings of the 22nd International Conference on
  Artificial Intelligence and Statistics}, pages 837--848, 2019.

\bibitem[Park et~al.(2016)Park, Jitkrittum, and Sejdinovic]{Park2016}
M.~Park, W.~Jitkrittum, and D.~Sejdinovic.
\newblock {K2-ABC}: {A}pproximate {B}ayesian computation with kernel
  embeddings.
\newblock In \emph{Proceedings of the 19th International Conference on
  Artificial Intelligence and Statistics}, pages 398--407, 2016.

\bibitem[Patefield and Tandy(2000)]{Patefield2000}
M.~Patefield and D.~Tandy.
\newblock {Fast and accurate calculation of Owen’s T function}.
\newblock \emph{Journal of Statistical Software}, 5\penalty0 (5):\penalty0
  1--25, 2000.

\bibitem[Pleiss et~al.(2018)Pleiss, Gardner, Weinberger, and
  Wilson]{Pleiss2018}
G.~Pleiss, J.~Gardner, K.~Weinberger, and A.~G. Wilson.
\newblock Constant-time predictive distributions for {G}aussian processes.
\newblock In \emph{Proceedings of the 35th International Conference on Machine
  Learning}, pages 4114--4123, 2018.

\bibitem[Prangle(2017)]{Prangle2017}
D.~Prangle.
\newblock Adapting the {ABC} distance function.
\newblock \emph{Bayesian Analysis}, 12\penalty0 (1):\penalty0 289--309, 2017.

\bibitem[Price et~al.(2018)Price, Drovandi, Lee, and Nott]{Price2018}
L.~F. Price, C.~C. Drovandi, A.~Lee, and D.~J. Nott.
\newblock Bayesian synthetic likelihood.
\newblock \emph{Journal of Computational and Graphical Statistics}, 27\penalty0
  (1):\penalty0 1--11, 2018.

\bibitem[Pritchard et~al.(1999)Pritchard, Seielstad, Perez-Lezaun, and
  Feldman]{Pritchard1999}
J.~K. Pritchard, M.~T. Seielstad, A.~Perez-Lezaun, and M.~W. Feldman.
\newblock {Population growth of human {Y} chromosomes: a study of {Y}
  chromosome microsatellites.}
\newblock \emph{Molecular Biology and Evolution}, 16\penalty0 (12):\penalty0
  1791--1798, 1999.

\bibitem[Rahimi and Recht(2008)]{Rahimi2008rff}
A.~Rahimi and B.~Recht.
\newblock Random features for large-scale kernel machines.
\newblock In \emph{Advances in Neural Information Processing Systems 20}, pages
  1177--1184. 2008.

\bibitem[Rasmussen and Williams(2006)]{Rasmussen2006}
C.~E. Rasmussen and C.~K.~I. Williams.
\newblock \emph{{Gaussian Processes for Machine Learning}}.
\newblock The MIT Press, 2006.

\bibitem[Riihimäki and Vehtari(2014)]{Riihimaki2014}
J.~Riihimäki and A.~Vehtari.
\newblock Laplace approximation for logistic {G}aussian process density
  estimation and regression.
\newblock \emph{Bayesian Analysis}, 9\penalty0 (2):\penalty0 425--448, 2014.

\bibitem[Rogers et~al.(2019)Rogers, Peiris, Pontzen, Bird, Verde, and
  Font-Ribera]{Rogers2019}
K.~K. Rogers, H.~V. Peiris, A.~Pontzen, S.~Bird, L.~Verde, and A.~Font-Ribera.
\newblock {Bayesian emulator optimisation for cosmology: application to the
  Lyman-alpha forest}.
\newblock \emph{Journal of Cosmology and Astroparticle Physics}, 2019.

\bibitem[Sinsbeck and Nowak(2017)]{Sinsbeck2017}
M.~Sinsbeck and W.~Nowak.
\newblock Sequential design of computer experiments for the solution of
  {B}ayesian inverse problems.
\newblock \emph{SIAM/ASA Journal on Uncertainty Quantification}, 5\penalty0
  (1):\penalty0 640--664, 2017.

\bibitem[Snoek et~al.(2012)Snoek, Larochelle, and Adams]{Snoek2012}
J.~Snoek, H.~Larochelle, and R.~P. Adams.
\newblock {Practical Bayesian optimization of machine learning algorithms}.
\newblock In \emph{Advances in Neural Information Processing Systems 25}, pages
  1--9, 2012.

\bibitem[Srinivas et~al.(2010)Srinivas, Krause, Kakade, and
  Seeger]{Srinivas2010}
N.~Srinivas, A.~Krause, S.~Kakade, and M.~Seeger.
\newblock Gaussian process optimization in the bandit setting: No regret and
  experimental design.
\newblock In \emph{Proceedings of the 27th International Conference on Machine
  Learning}, pages 1015--1022, 2010.

\bibitem[Thomas et~al.(2018)Thomas, Dutta, Corander, Kaski, and
  Gutmann]{Thomas2018}
O.~Thomas, R.~Dutta, J.~Corander, S.~Kaski, and M.~U. Gutmann.
\newblock {Likelihood-free inference by ratio estimation}.
\newblock \url{arXiv1611.10242}, 2018.

\bibitem[Wang and Jegelka(2017)]{Wang2017}
Z.~Wang and S.~Jegelka.
\newblock Max-value entropy search for efficient {B}ayesian optimization.
\newblock In \emph{Proceedings of the 34th International Conference on Machine
  Learning}, pages 3627--3635, 2017.

\bibitem[Wilkinson(2013)]{Wilkinson2013}
R.~D. Wilkinson.
\newblock {Approximate Bayesian computation (ABC) gives exact results under the
  assumption of model error}.
\newblock \emph{Statistical Applications in Genetics and Molecular Biology},
  12\penalty0 (2):\penalty0 129--141, 2013.

\bibitem[Wilkinson(2014)]{Wilkinson2014}
R.~D. Wilkinson.
\newblock {Accelerating ABC methods using Gaussian processes}.
\newblock In \emph{Proceedings of the 17th International Conference on
  Artificial Intelligence and Statistics}, 2014.

\bibitem[Wilson et~al.(2018)Wilson, Hutter, and Deisenroth]{Wilson2018}
J.~Wilson, F.~Hutter, and M.~Deisenroth.
\newblock Maximizing acquisition functions for {B}ayesian optimization.
\newblock In \emph{Advances in Neural Information Processing Systems 31}, pages
  9906--9917. 2018.

\bibitem[Wood(2010)]{Wood2010}
S.~N. Wood.
\newblock {Statistical inference for noisy nonlinear ecological dynamic
  systems.}
\newblock \emph{Nature}, 466:\penalty0 1102--1104, 2010.

\bibitem[Wu and Frazier(2016)]{Wu2016}
J.~Wu and P.~Frazier.
\newblock The parallel knowledge gradient method for batch {B}ayesian
  optimization.
\newblock In \emph{Advances in Neural Information Processing Systems 29}, pages
  3126--3134. 2016.

\end{thebibliography}
\bibliographystyle{plainnat}

\onecolumn
\appendix
\numberwithin{equation}{section}
\numberwithin{figure}{section}


\section{Proofs and additional analysis} \label{app:proofs}

\subsection{Proofs} \label{appsubsec:proofs}

\begin{proof}[Proof of Lemma \ref{lemma:imad}]
We consider the case of integrated variance first. A result corresponding to (\ref{eq:iv}) but with zero mean GP prior is shown as Lemma 3.1 in the article by \citet{Jarvenpaa2018_acq}. However, its proof works as such also for our GP model in Section \ref{sec:post_babc} and (\ref{eq:iv}) follows immediately.

Let us now consider integrated MAD in (\ref{eq:imad}). 
To simplify notation, we use $m_{\Btheta}$ for $m_t(\Btheta)$,  $s^2_{\Btheta}$ for $s^2_t(\Btheta)$ and $f_{\Btheta}$ for $f(\Btheta)$. We then see that
\begin{align}
    &\mean_{f\cond D_t} \int_{\Theta} \left|\pi(\Btheta)\Phi\left( \frac{\epsilon - f_{\Btheta}}{\sigma_n}\right) - \pi(\Btheta)\Phi\left( \frac{\epsilon - m_{\Btheta}}{\sigma_n}\right)\right| \ud \Btheta \\
    &= \int_{\Theta} \pi(\Btheta) \int_{-\infty}^{\infty} \left| \Phi\left( \frac{\epsilon - f_{\Btheta}}{\sigma_n}\right) - \Phi\left( \frac{\epsilon - m_{\Btheta}}{\sigma_n} \right) \right| \Normal(f_{\Btheta}\cond m_{\Btheta},s^2_{\Btheta}) \ud f_{\Btheta} \ud \Btheta.
\end{align}
For the inner integral with fixed $\Btheta$ we obtain
\begin{align}
    &\int_{-\infty}^{\infty} \left| \Phi\left( \frac{\epsilon - f_{\Btheta}}{\sigma_n}\right) - \Phi\left( \frac{\epsilon - m_{\Btheta}}{\sigma_n} \right) \right| \Normal(f_{\Btheta}\cond m_{\Btheta},s^2_{\Btheta}) \ud f_{\Btheta} \\
    \begin{split}
    &= \int_{-\infty}^{m_{\Btheta}} \left[ \Phi\left( \frac{\epsilon - f_{\Btheta}}{\sigma_n}\right) - \Phi\left( \frac{\epsilon - m_{\Btheta}}{\sigma_n} \right) \right] \Normal(f_{\Btheta}\cond m_{\Btheta},s^2_{\Btheta}) \ud f_{\Btheta} \\
    &\myquad+ \int_{m_{\Btheta}}^{\infty} \left[ \Phi\left( \frac{\epsilon - m_{\Btheta}}{\sigma_n}\right) - \Phi\left( \frac{\epsilon - f_{\Btheta}}{\sigma_n} \right) \right] \Normal(f_{\Btheta}\cond m_{\Btheta},s^2_{\Btheta}) \ud f_{\Btheta} \end{split} \\
    &= \int_{-\infty}^{m_{\Btheta}} \Phi\left( \frac{\epsilon - f_{\Btheta}}{\sigma_n}\right) \Normal(f_{\Btheta}\cond m_{\Btheta},s^2_{\Btheta}) \ud f_{\Btheta}
    - \int_{m_{\Btheta}}^{\infty} \Phi\left( \frac{\epsilon - f_{\Btheta}}{\sigma_n}\right) \Normal(f_{\Btheta}\cond m_{\Btheta},s^2_{\Btheta}) \ud f_{\Btheta} \\
    &= 2\int_{-\infty}^{m_{\Btheta}} \Phi\left( \frac{\epsilon - f_{\Btheta}}{\sigma_n}\right) \Normal(f_{\Btheta}\cond m_{\Btheta},s^2_{\Btheta}) \ud f_{\Btheta}
    - \Phi\left( \frac{\epsilon - m_{\Btheta}}{\sqrt{\sigma_n^2+s^2_{\Btheta}}}\right),
\end{align}
where on the last line we have used the fact
\begin{align}
    \int_{-\infty}^{\infty}\Phi\left( \frac{\epsilon - f_{\Btheta}}{\sigma_n}\right)\Normal(f_{\Btheta}\cond m_{\Btheta},s^2_{\Btheta}) \ud f_{\Btheta} = \Phi\left( \frac{\epsilon - m_{\Btheta}}{\sqrt{\sigma_n^2+s^2_{\Btheta}}}\right) \label{eq:helpformula1}
\end{align}
shown by \citet{Jarvenpaa2018_acq}. We further see that
\begin{align}
&\int_{-\infty}^{m_{\Btheta}} \Phi\left( \frac{\epsilon - f_{\Btheta}}{\sigma_n}\right) \Normal(f_{\Btheta}\cond m_{\Btheta},s^2_{\Btheta}) \ud f_{\Btheta} \\
&= \int_{-\infty}^{0} \Phi\left( \frac{\epsilon - m_{\Btheta} - y}{\sigma_n}\right) \Normal(y\cond 0,s^2_{\Btheta}) \ud y \quad [\textnormal{transformation}\;y = f_{\Btheta} - m_{\Btheta}] \\
&= \int_{-\infty}^{0} \int_{-\infty}^{\epsilon - m_{\Btheta}} \Normal(x\cond y,\sigma_n^2) \Normal(y\cond 0,s^2_{\Btheta}) \ud x \ud y \\
&= \frac{1}{2\pi \sigma_n s_{\Btheta}} \int_{-\infty}^{0} \int_{-\infty}^{\epsilon - m_{\Btheta}} \exp\left( -\half\left[ \frac{(x-y)^2}{\sigma_n^2} + \frac{y^2}{s_{\Btheta}^2} \right] \right) \ud x \ud y \\
&= \frac{1}{2\pi \sigma_n s_{\Btheta}} \int_{-\infty}^{0} \int_{-\infty}^{\epsilon - m_{\Btheta}} \exp\left( -\half \begin{bmatrix}x\\y\end{bmatrix}\T\!\begin{bmatrix}s^2_{\Btheta}+\sigma_n^2&s^2_{\Btheta}\\s^2_{\Btheta}&s^2_{\Btheta}+\sigma_n^2\end{bmatrix}^{-1}\!\begin{bmatrix}x\\y\end{bmatrix}\right) \ud x \ud y \\
&= \Phi_2\left( \begin{bmatrix}\epsilon - m_{\Btheta}\\0\end{bmatrix} \Big| \begin{bmatrix}0\\0\end{bmatrix}, \begin{bmatrix}s^2_{\Btheta}+\sigma_n^2&s^2_{\Btheta}\\s^2_{\Btheta}&s^2_{\Btheta}+\sigma_n^2\end{bmatrix} \right) \\
&= \bvn\left( \frac{\epsilon - m_{\Btheta}}{\sqrt{\sigma_n^2+s^2_{\Btheta}}},0;\frac{s_{\Btheta}}{\sqrt{\sigma_n^2+s^2_{\Btheta}}} \right),
\end{align}
where $\Phi_2$ denotes the bivariate Normal cdf and $\bvn(a,b;\rho)$ denotes the zero-mean bivariate Normal cdf with unit variances and correlation coefficient $\rho$ evaluated at $[a,b]\T$. Finally, using a connection between bivariate Gaussian cdf and \owen{} \cite{Owen1956}, we obtain
\begin{align}
    &\bvn\left( \frac{\epsilon - m_{\Btheta}}{\sqrt{\sigma_n^2+s^2_{\Btheta}}},0;\frac{s_{\Btheta}}{\sqrt{\sigma_n^2+s^2_{\Btheta}}} \right) 
    = T\left( \frac{\epsilon - m_{\Btheta}}{\sqrt{\sigma_n^2+s^2_{\Btheta}}}, \frac{s_{\Btheta}}{\sigma_n} \right) + \half\Phi\left( \frac{\epsilon - m_{\Btheta}}{\sqrt{\sigma_n^2+s^2_{\Btheta}}}\right).
\end{align}
When we combine the equations, we see that the $\Phi(\cdot)$-terms cancel out and we obtain (\ref{eq:eimad}).
\end{proof}

\begin{proof}[Proof of Proposition \ref{prop:eimad}]
The formula for the \eiv{} can be derived in a straightforward manner by combining the GP lookahead formulas given by Lemma 5.1 in \citet{Jarvenpaa2019_sl} with the proof of Proposition 3.2 in \citet{Jarvenpaa2018_acq}. 

The case of \eimad{} requires some extra work. 
First, using an equation from the proof of Lemma \ref{lemma:imad}, we obtain
\begin{align}
    &\mean_{\BDelta\!^*\cond\Btheta^*,D_t} \mathcal{L}^{\textnormal{m}}(\probmeas_{D_t\cup D^*}^f) \\
    &= \mean_{\BDelta\!^*\cond\Btheta^*,D_t} \int_{\Theta} \pi(\Btheta) \Bigg[ 2\!\int_{-\infty}^{0} \Phi\left( \frac{\epsilon - m^*_{t+b}(\Btheta) - y}{\sigma_n}\right) \Normal(y\cond 0,s^{*2}_{t+b}(\Btheta)) \ud y 
    %
    - \Phi\Bigg( \frac{\epsilon - m^*_{t+b}(\Btheta)}{\sqrt{\sigma_n^2+s_{t+b}^{*2}(\Btheta)}}\Bigg) \Bigg] \ud\Btheta \\
    \begin{split}
    &= \int_{\Theta} \pi(\Btheta) \Bigg[ 2\!\int_{-\infty}^{0} \mean_{\BDelta\!^*\cond\Btheta^*,D_t} \Phi\left( \frac{\epsilon - m^*_{t+b}(\Btheta) - y}{\sigma_n}\right) \Normal(y\cond 0,s^{*2}_{t+b}(\Btheta)) \ud y \\
    %
    &\myquad- \mean_{\BDelta\!^*\cond\Btheta^*,D_t} \Phi\Bigg( {\frac{\epsilon - m^*_{t+b}(\Btheta)}{\sqrt{\sigma_n^2+s_{t+b}^{*2}(\Btheta)}}}\Bigg) \Bigg] \ud\Btheta.
    \end{split}
\end{align}
Note that $*$ in $m^*_{t+b}(\Btheta)$ and $s^{*2}_{t+b}(\Btheta)$ is used to emphasise that these quantities depend on $\BDelta\!^*$ and/or $\Btheta^*$.
Since $s_{t+b}^{*2}(\Btheta) = s_t^2(\Btheta) - \Deltav_t(\Btheta;\Btheta^*)$, i.e.~the reduction of the GP variance function $\Deltav_t(\Btheta;\Btheta^*)$ at $\Btheta$ due to the $b$ extra evaluations $D^*$ is deterministic and depends only on $\Btheta^*$ (and not on $\BDelta\!^*$), we obtain for each $\Btheta$ that
\begin{align}
    &\mean_{\BDelta\!^*\cond\Btheta^*,D_t} \Phi\left( \frac{\epsilon - m^*_{t+b}(\Btheta) - y}{\sigma_n}\right) \Normal(y\cond 0,s^{*2}_{t+b}(\Btheta)) \\
    &= \int_{-\infty}^{\infty} \Phi\left( \frac{\epsilon - y - m^*_{t+b}(\Btheta)}{\sigma_n}\right) \Normal(m^*_{t+b}(\Btheta)\cond m_{t}(\Btheta),\Deltav_t(\Btheta;\Btheta^*)) \ud m^*_{t+b}(\Btheta) \Normal(y\cond 0,s_t^2(\Btheta) - \Deltav_t(\Btheta;\Btheta^*)) \\
    &= \Phi\left( \frac{\epsilon - m_{t}(\Btheta) - y}{\sqrt{\sigma_n^2 + \Deltav_t(\Btheta;\Btheta^*)}}\right) \Normal(y\cond 0,s_t^2(\Btheta) - \Deltav_t(\Btheta;\Btheta^*)),
\end{align}
where we have used Lemma 5.1 by \citet{Jarvenpaa2019_sl} and (\ref{eq:helpformula1}). Similarly, we see that
\begin{align}
    \mean_{\BDelta\!^*\cond\Btheta^*,D_t} \Phi\left( \frac{\epsilon - m^*_{t+b}(\Btheta)}{\sqrt{{\sigma_n^2+s_{t+b}^{*2}(\Btheta)}}}\right) 
    %
    %
    = \Phi\left( \frac{\epsilon - m_{t}(\Btheta)}{\sqrt{\sigma_n^2+s_{t}^{2}(\Btheta)}}\right). 
\end{align}
The result now follows by proceeding as in the second part of the proof of Lemma \ref{lemma:imad}. 
\end{proof}

\begin{proof}[Proof of Proposition \ref{prop:lcbconnection}]
\citet{Jarvenpaa2018_acq} showed that the $\alpha$-quantile for $\tildepiabc(\Btheta)$ at any fixed $\Btheta\in\Theta$ is given by
\begin{align}
    z_{t,\alpha}(\Btheta) = \pi(\Btheta)\Phi\left( \frac{s_t(\Btheta)\Phi^{-1}(\alpha) - m_t(\Btheta) + \epsilon}{\sigma_n} \right). \label{eq:quantile}
\end{align}
Using this fact when $\pi(\Btheta)$ is assumed a constant in $\Theta$ shows that
\begin{align}
    \Btheta\opt &= \argmax{\Btheta^*\in\Theta}z_{t,\alpha}(\Btheta^*) \\
    &= \argmax{\Btheta^*\in\Theta}\{ s_t(\Btheta^*)\Phi^{-1}(\alpha) - m_t(\Btheta^*) \} \\
    &= \argmin{\Btheta^*\in\Theta}\{ m_t(\Btheta^*) - \Phi^{-1}(\alpha)s_t(\Btheta^*) \}. \label{eq:last}
\end{align}
Comparison of (\ref{eq:last}) and the LCB acquisition function $\textnormal{LCB}(\Btheta^*) = m_t(\Btheta^*) - \beta_t s_t(\Btheta^*)$ shows immediately that these coincide if $\beta_t=\Phi^{-1}(\alpha)$ i.e.~if $\alpha = \Phi(\beta_t)$. 
\end{proof}

\subsection{On the Gaussian assumptions} \label{appsubsec:gaussianity}

We justify the seemingly strong Gaussianity assumption of the discrepancy $\Delta_{\Btheta}$. 
We briefly analyse a typical case where the discrepancy is formed as a Mahalanobis distance 
\begin{equation}
\Delta_{\Btheta} = \sqrt{(\Bsobs - \Bs_{\Btheta})\T\BW(\Bsobs - \Bs_{\Btheta})}, 
\label{eq:mahald}
\end{equation}
where $\BW\in\reals^{d\times d}$ is a positive definite, $\Bsobs\eqdef s(\Bxobs), \Bs_{\Btheta}\eqdef s(\Bx_{\Btheta})$, and $s:\mathcal{X}\rightarrow\reals^d$ is the summary statistics function usually with $d\geq p$. Recall that $p$ is the dimension of the parameter space $\Theta$. 
If we assume\footnote{This assumption is also made in the synthetic likelihood method \cite{Wood2010,Price2018}.} $\Bs_{\Btheta}$ is jointly Gaussian for each $\Btheta$, some $\Btheta'$ in the posterior modal area satisfies $\Bs_{\Btheta'}\sim\Normal(\Bsobs,\BSigma_{\Btheta'})$ with positive definite $\BSigma_{\Btheta'}$ and if we further choose $\BW = \BSigma^{-1}_{\Btheta'}$, then $\Delta^2_{\Btheta'}\sim\chi^2(d)$, the chi-squared distribution with degree of freedom $d$. 
This follows by noticing that there exists $L_{\Btheta'}\in\reals^{d\times d}$ such that $\BSigma_{\Btheta'}=L_{\Btheta'}L_{\Btheta'}\T$, because $L_{\Btheta'}^{-1}(\Bsobs-\Bs_{\Btheta'})\sim\Normal(\Bzeros,\Id)$ and because the chi-squared distribution $\chi^2(d)$ can be characterised as a sum of squares of $d$ independent standard Normal random variables. 
Further, using the last-mentioned fact, the central limit theorem (CLT), the delta method and the obvious fact that the square root is a smooth function, one can reason that 
$\Delta_{\Btheta'} = (\Delta^2_{\Btheta'})^{1/2}$ 
is approximately Gaussian for large enough $d$. In fact, $\Delta_{\Btheta'}\sim\chi(d)$, the chi distribution with degree of freedom $d$, which is fairly close to Gaussian distribution already with $d=5$.

If $\Bs_{\Btheta'} - \Bsobs$ has nonzero mean and/or $\BW \neq \BSigma^{-1}_{\Btheta'}$, then $\Delta^2_{\Btheta'}$ is no longer chi-squared distributed but follows generalised chi-squared distribution. Detailed analysis of this general case seems difficult. However, if we further assume that the individual summaries, i.e.~the elements of $\Bs_{\Btheta'}$, are independent, and if $\BW$ is diagonal and scales $\Bs_{\Btheta'} - \Bsobs$ so that its elements do not have too variable means and variances which are requirements for a sensible discrepancy function \cite{Prangle2017}, then CLT (with Lindeberg or Lyapunov condition) and delta method might apply so that the approximate Gaussianity still holds for large enough $d$. In this case, the Gaussianity assumption of $\Bs_{\Btheta'}$ is in fact not necessary.

While $\sigma_n^2$ can be heteroscedastic, i.e.~depend on $\Btheta$ as empirically investigated by \citet{Jarvenpaa2018_aoas}, we can expect by continuity that it is often approximately constant on the modal area of the posterior where the GP fit only needs to be accurate. 
%
Also, while the discrepancy is not exactly Gaussian because $\Delta_{\Btheta}$ in (\ref{eq:mahald}) is obviously non-negative, the amount of probability mass of the Gaussian density on the negative values of $\Delta_{\Btheta}$ will typically be very small. 
Finally, while the analysis of this section and our empirical investigations shown in Fig.~\ref{fig:gaus} support the Gaussian assumption, for a particular problem at hand and as in all Bayesian modelling, the goodness of the model fit should be assessed. 

\begin{figure*}[hbt!] 
\centering
\includegraphics[width=0.8\textwidth]{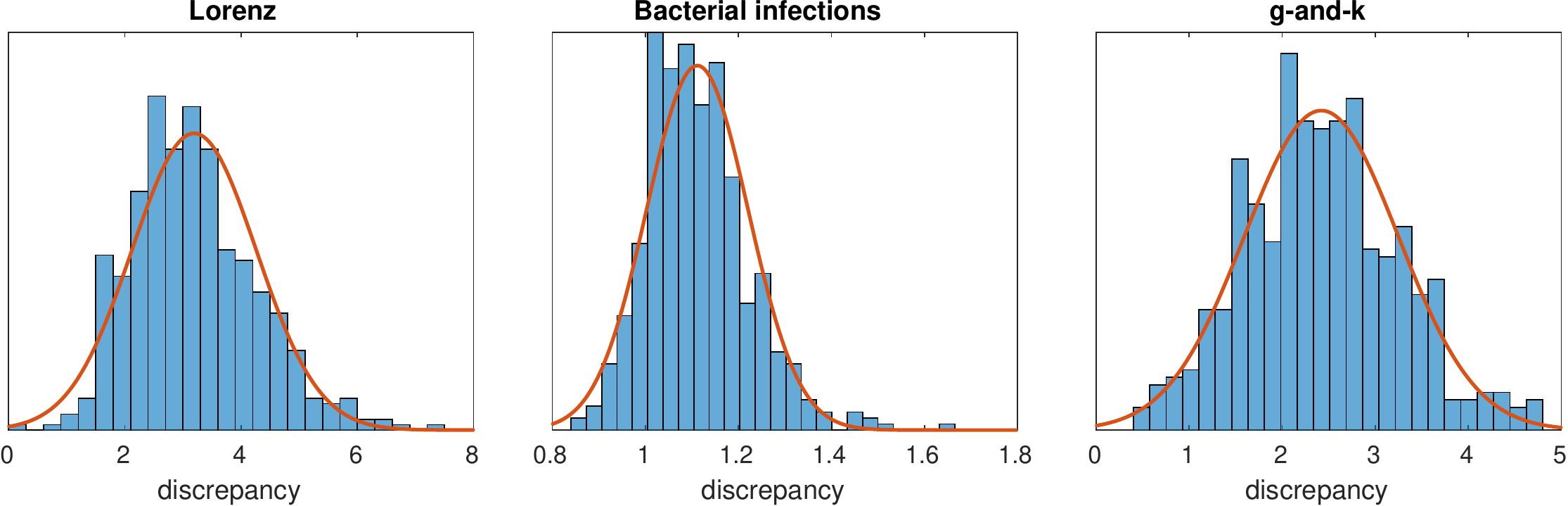}
\caption{Empirical distributions of the discrepancy of the three real-world problems used in this work at their true parameter values. The histogram shows the discrepancy values corresponding to $500$ simulations and the red line shows corresponding Gaussian densities. The discrepancy for the Lorenz and g-and-k model model is formed as a Mahalanobis distance (see Section \ref{app:add_exp} for details). It is seen that the Gaussian assumption is reasonable. } \label{fig:gaus}
\end{figure*}

\section{Additional details on implementation and experiments} \label{app:implementation}

\subsection{Implementation details} \label{app:impl}

We present additional implementation details of our inference algorithm. 
%
The batch-sequential \eiv{} method is shown as Algorithm \ref{alg:main}. 
Other methods for acquiring evaluation locations (\eimad{}, \maxv{}, \maxmad{} and RAND) can be used similarly.
The accuracy of the resulting ABC posterior can be assessed as described in Section \ref{sec:post_uncertainty} either at each iteration (e.g.~immediately after line 15) or only finally (line 19). 

{\centering
\begin{minipage}{.78\linewidth}
\begin{algorithm}[H]
\caption{\babc{} using \eiv{} with synchronous batch design \label{alg:main}} 
 \begin{algorithmic}[1]
 \Require Prior $\pi(\Btheta)$, simulation model $\pi(\Bx \cond \Btheta)$, GP prior $\probmeas^{f}$, discrepancy $\Delta(\Bxobs,\Bx)$, 
 batch size $b$, initial batch size $b_0$, max.~iterations $\iter_{\text{max}}$, 
 number of IS samples $s_{\text{IS}}$, number of MCMC samples $s_{\text{MC}}$ 
 %
 \For{$r=1:b_0$} \Comment{\textbf{Can be run in parallel.}}
 \State Sample $\Btheta_{r} \simiid \pi(\cdot)$ \Comment{{Space filling designs~can be alternatively used.}}
 \State Simulate $\Bx_{r} \simiid \pi(\cdot \cond \Btheta_{r})$ and compute $\Delta_r = \Delta(\Bxobs,\Bx_{r})$
 \EndFor
 \State Set $D_{b_0} \leftarrow \{(\Delta_{r}, \Btheta_{r})\}_{r=1}^{b_0}$
 %
 \For{$\iter=1:\iter_{\text{max}}$} 
  %
  \State Obtain GP hyperparameters $\Bpsi_{\text{MAP}}$ using $D_{b_0+(\iter-1)b}$ 
  %
  \State Sample $\Btheta^{(j)}\sim\pi_q$ using MCMC and compute IS weights $\omega^{(j)}$ for $j = 1,\ldots,s_{\text{IS}}$ 
  %
  \For{$r=1:b$} \Comment{{Batch is constructed using greedy optimisation.}}
  	\State Obtain $\Btheta_{r}^{*}$ as the minimiser of the IS approximation of $L_t^{\textnormal{v}}([\Btheta_{1:r-1}^{*},\Btheta_{r}^{*}])$ in (\ref{eq:eiv})
  \EndFor
  \For{$r=1:b$} \Comment{\textbf{Can be run in parallel.}}
 	\State Simulate $\Bx_{r}^{*} \simiid \pi(\cdot \cond \Btheta_{r}^{*})$ and compute $\Delta_r^* = \Delta(\Bxobs,\Bx_{r}^{*})$
 \EndFor
  \State Update training data $D_{b_0+ib} \leftarrow D_{b_0+(i-1)b} \cup \{(\Delta_{r}^{*}, \Btheta_{r}^{*})\}_{r=1}^b$
 \EndFor
 \State Obtain GP hyperparameters $\Bpsi_{\text{MAP}}$ using $D_{b_0 + \iter_{\text{max}}b}$
 \State Sample $\Bvartheta^{(1:s_{\text{MC}})}$ from (\ref{eq:pimean}) using MCMC \Comment{(\ref{eq:pimed}) can be alternatively used.}
 \State \Return Samples $\Bvartheta^{(1:s_{\text{MC}})}$ from the approximate ABC posterior
 \end{algorithmic}
\end{algorithm}
\end{minipage}
\par
}
\vspace{.4cm}

When the dimension of the parameter space $p>2$, we used the adaptive MCMC method by \citet{Haario2006} to sample from the model-based estimates of the ABC posterior (line 18) and from the instrumental densities needed for the IS approximation of \eiv{} and \eimad{} acquisition functions (line 8). Adaptive MCMC was also used for the IS approximation needed for ABC posterior uncertainty quantification. In all of these cases, we run multiple chains initialised at the point with the highest log-density value computed over the current points in $D_t$. The first half of each chain was neglected as burn-in and the chains were then combined and thinned. In 2D, similar grid-based numerical computations were used instead. 

When sampling from the model-based estimate of the ABC posterior (line 18), the samples were thinned to the size of $10^4$ and kernel density estimation was used to estimate the (marginal) densities from the resulting samples. For the grid-based numerical computations in 2D, we used $100\times 100$ grid of points.

To evaluate \eiv{} and \eimad{} acquisition functions, we first sampled from the instrumental density (denoted as $\pi_q$ on line 8 of Algorithm \ref{alg:main}) which is the current loss surface interpreted as a pdf as mentioned in Section \ref{subsec:opt}. These samples were thinned to the size of $500$ points used for computing the normalised importance weights $\omega^{(j)}$. In 2D, $50\times 50$ grid-based computations were used instead. 
The same instrumental density and thus the same set of importance samples was used for greedily optimising each point in the batch (line 10) although it is also possible to use different instrumental densities. 
%
The global optimisation of the acquisition functions was performed by first using random search (with $1000$ points in 2D and $2000$ in 3D and 4D) to roughly locate good regions and then improving the best $10$ points found this way by initialising gradient-based algorithm at these points.\footnote{We used \texttt{fmincon} in MATLAB. The gradient was approximated by finite differences for simplicity but analytical gradient computations could be also used to improve optimisation.} The best point evaluated was taken as the optimal solution. While other optimisation strategies are also possible, our method already produced good results.

We used the following settings for the uncertainty quantification of the ABC posterior in Section \ref{sec:post_uncertainty}: The 2D integrals over $\Theta$ were computed numerically in a $80\times 80$ grid, i.e.~we used $\bar{n}=80$ producing $6400$ grid points. For $p>2$, we used the adaptive MCMC with $15$ chains each with length $20000$. The chains were finally combined and thinned to $\tilde{n}=7500$ representative points for computing the importance weights. We used $s=2000$ GP sample paths. Marginal densities for e.g.~Fig.~\ref{fig:bacterial_post_uncertainty} were computed from the resulting weighted sample sets using weighted kernel density estimation.

\subsection{Computation times for optimising the acquisition functions}

The computational cost of evaluating the acquisition functions of Section \ref{sec:parallel} depends on various factors. 
We here report computation times\footnote{These times were obtained on a standard laptop with Intel Core i5 2.3GHz CPU and 8Gb RAM.} of our MATLAB implementation\footnote{\owen{} values were computed using an efficient C-implementation of the algorithm by \citet{Patefield2000}.} when the simulation budget is $t=810$ in 2D (Multimodal toy model) and $t=820$ in 4D (g-and-k model). We report the computation times at both the first and the last iteration. These show the minimum and maximum costs, respectively. 
%

In 2D, where grid-based numerical computations were used, sequential \maxv{} required $0.3-1.5$s and its batch version $5-35$s for constructing the whole batch of size $b=5$. In 4D, the computation times roughly doubled. 
In 2D, sequential \eiv{} required $2.5-13$s and its batch version $18-80$s for the whole batch of size $b=5$. In 4D, these times were $9-80$s and $27-250$s, respectively. The computation time of \eiv{} scales better than linearly for $b$ in 4D because we sample once from the instrumental density in the beginning and re-use the same importance weights for selecting each point in the current batch. In 2D, this scaling is roughly linear. 

The difference in computation times between \maxmad{} and \maxv{}, as well as between \eimad{} and \eiv{}, was small. This is because the computation costs are dominated by the GP-based computations and evaluations of the \owen{} needed for both. 
Finally, we emphasise that while the GP computations and the optimisation of the acquisition function are not particularly cheap, the simulation times for realistic models typically dominate the total cost. The reported computation times can be also reduced by more efficient implementation. 
However, if running the simulation model is very fast (e.g.~less than a fraction of a second), standard ABC methods should be preferred even if they require substantially more simulations.

\subsection{Additional details on experiments} \label{app:add_exp}

We describe additional details of the experimental set-up. 
Fig.~\ref{fig:syn} visualises the four synthetically constructed 2D posteriors used in Section \ref{subsec:synth}. These examples were taken from \citet{Jarvenpaa2018_acq} where further details can be found. 

\begin{figure*}[hbt!] 
\centering
\includegraphics[width=0.93\textwidth]{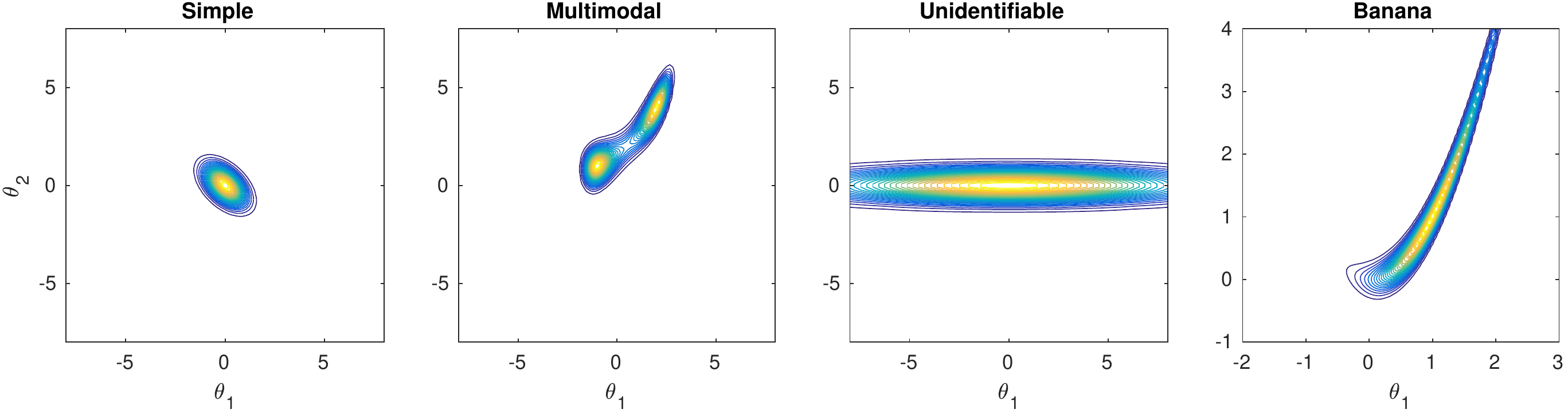}
\caption{Synthetic 2D posterior densities used in the experiments of Section \ref{subsec:synth}.} \label{fig:syn}
\end{figure*}

We used ABC-MCMC to obtain the ground truth ABC posterior. The algorithm was initialised with the true value or, in the case of the bacterial infections model, using a point estimate from earlier studies \cite{Numminen2013}. The proposal density for ABC-MCMC was hand-tuned. For Lorenz model we used $8$ chains with length $3\cdot 10^6$ and for g-and-k model $8$ chains with length $10^7$ samples. For bacterial infections model we used $20$ chains with length $7.5\cdot 10^4$ samples. The chains were finally combined and thinned to $10^4$ samples to represent the ground truth ABC posterior. 

Mahalanobis distance as in Eq.~\ref{eq:mahald} was used as the discrepancy for Lorenz and g-and-k models. The simulation model was run $500$ times to estimate the covariance matrix of the summary statistics at the true parameter and the matrix $\BW$ was chosen to be the inverse of the covariance matrix. Of course, such discrepancy is unavailable in practice because the true parameter is unknown and the computational budget limited. However, as the main goal of this paper is to approximate any given ABC posterior with a limited simulation budget, we chose our target ABC posterior this way. For this reason we also fixed $\epsilon$ to small predefined value for each test problem. 
Investigating whether one could adaptively adjust the discrepancy in our \babc{} framework (without using a large number of replicates at each proposed point as is required e.g.~in the synthetic likelihood method \cite{Wood2010}) is left as a topic for future work. 

\citet{Gutmann2016} defined a discrepancy for the bacterial infections model by summing four $L^1$-distances computed between certain individual summaries. For details, see example $7$ in \citet{Gutmann2016}. We used the same discrepancy except that we further took square root of their discrepancy function. We obtained a similar ABC posterior as the original article \cite{Numminen2013} where ABC-PMC algorithm and a slightly different approach for comparing the data sets were used.

\section{Additional results and illustrations} \label{app:experiments}

We show additional results and illustrations of the experiments in Section \ref{sec:experiments}. 
%
Fig.~\ref{fig:acq1} and \ref{fig:acq2} show the evaluation locations and the resulting estimates of the ABC posteriors after $110$ simulations for two synthetic 2D models of Section \ref{subsec:synth}. 

\begin{figure*}[hbt!] 
\centering
\includegraphics[width=0.97\textwidth]{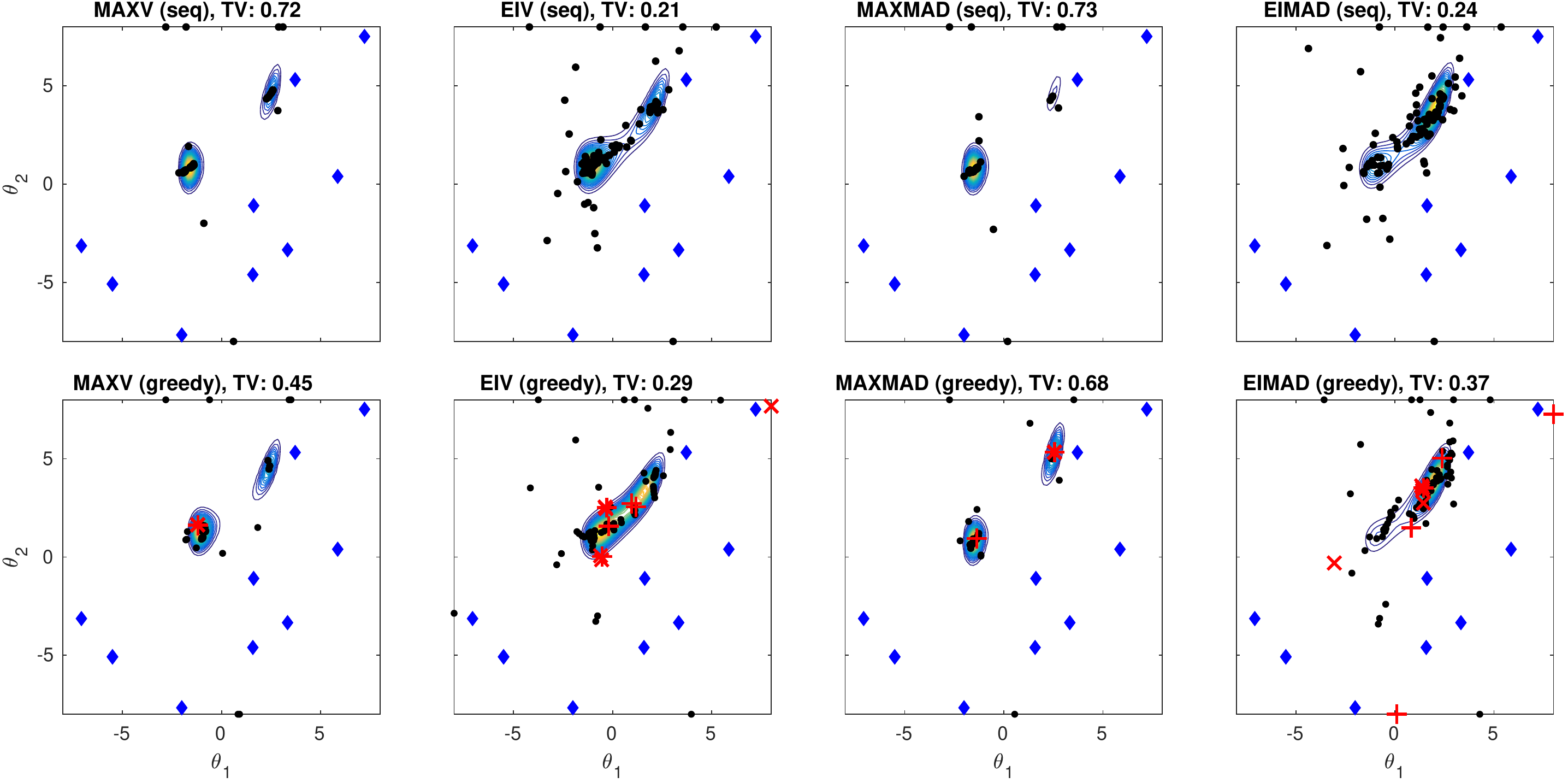}
\caption{Multimodal test problem. The first row shows the sequential methods and the second row the corresponding greedy batch methods. The blue diamonds show the $10$ initial points and the black dots $100$ additional points selected using each acquisition function (the last two batches in the second row are however highlighted by red plus-signs and crosses). TV shows the total variation distance between the true and estimated ABC posteriors for each particular case. } \label{fig:acq1}
\end{figure*}

\begin{figure*}[hbt!] 
\centering
\includegraphics[width=0.97\textwidth]{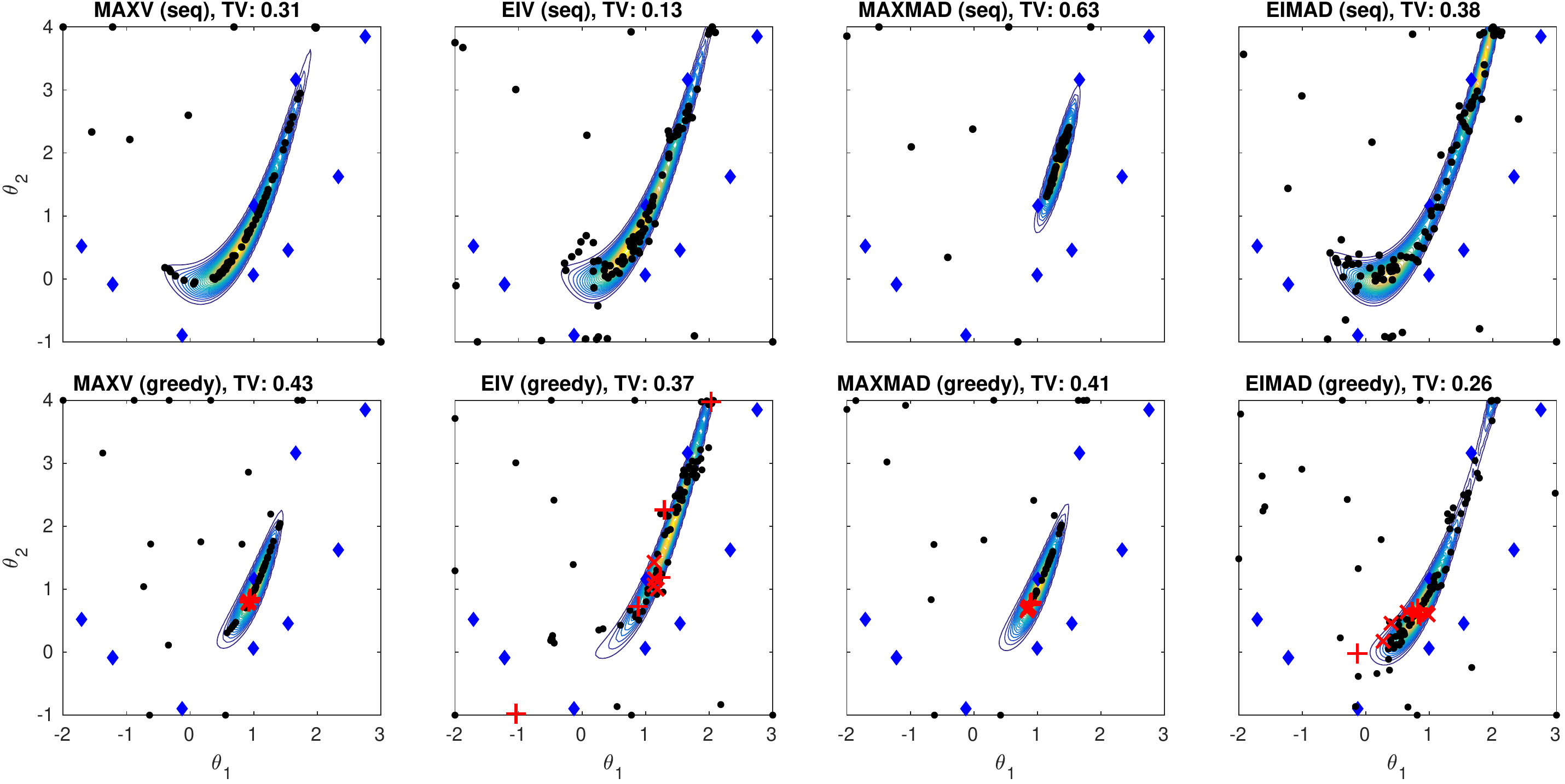}
\caption{Banana test problem. See the caption of Fig.~\ref{fig:acq1} for description. } \label{fig:acq2}
\end{figure*}


Fig.~\ref{fig:lorenz_example_post} and \ref{fig:bacterial_example_post} show typical estimated ABC posterior densities of the Lorenz and bacterial infections models of Section \ref{subsec:simmodels}, respectively. These results are shown to demonstrate the accuracy obtainable with very limited simulations. These particular results were obtained with the sequential \eiv{} method using $600$ iterations corresponding to $610$ simulations (Lorenz model) or $620$ simulations (bacterial infections model). 

\begin{figure}[hbt!] 
\centering
\includegraphics[width=1\textwidth]{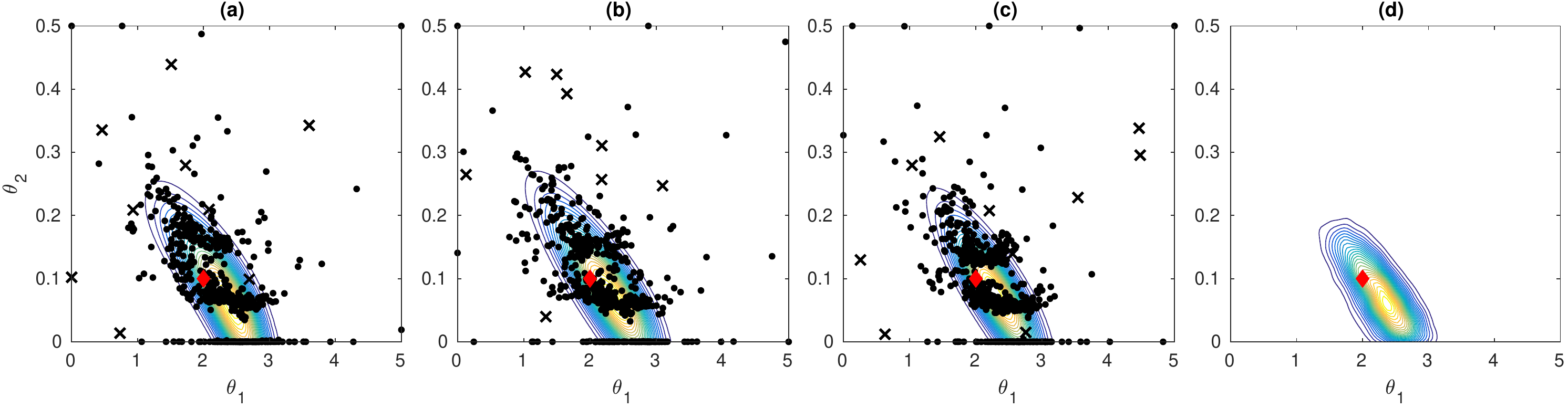}
\caption{Estimated ABC posteriors for the Lorenz model. (a-c) Three typical estimates of the ABC posterior with corresponding simulation locations. Initial locations are shown as black crosses and the ones selected using \eiv{} acquisition function are shown as black dots. The true parameter value used to generate the data is marked with the red diamond. (d) The ground truth ABC posterior computed using ABC-MCMC with extensive simulations.} \label{fig:lorenz_example_post}
\end{figure}

\begin{figure}[hbt!] 
\centering
\includegraphics[width=0.84\textwidth]{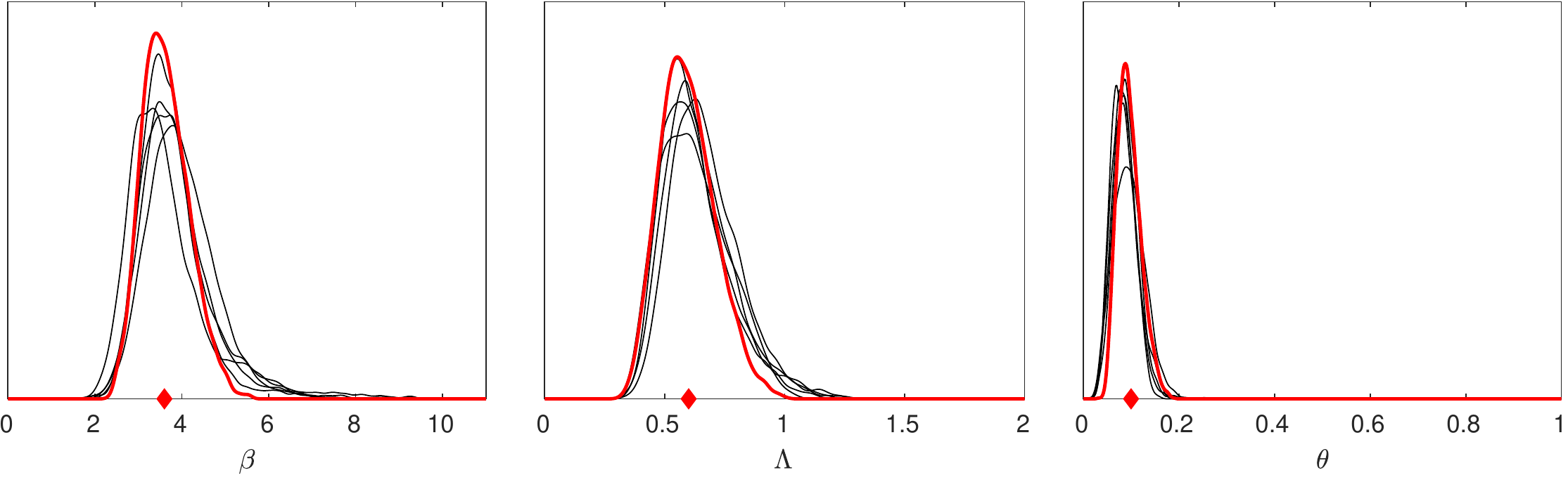}
\caption{Estimated marginal ABC posteriors for the bacterial infections model. Red lines show the ground truth ABC posterior computed using ABC-MCMC with extensive simulations. Black lines show five typical estimated ABC posteriors resulting from different simulation model realisations and the initial sets of simulation locations. The true parameter value used to generate the data is marked with the red diamond.} \label{fig:bacterial_example_post}
\end{figure}

Fig.~\ref{fig:bacterial_post_uncertainty} illustrates the ABC posterior uncertainty quantification for the bacterial infections model. Fig.~\ref{fig:bacterial_post_uncertainty_evol} shows the evolution of the uncertainty of the ABC posterior expectations over $600$ iterations. Sequential \eiv{} method was used and one typical case is shown. 
The results suggest that while the ABC posterior is well estimated at the last iteration, there is some uncertainty left about its exact shape.
Similar observations were also done with g-and-k model of the next section (results not shown). The true value is not always contained in the $95\%$ CI which is likely because the uncertainty in the GP hyperparameters is ignored for simplicity and because the GP is reasonable but imperfect model for the discrepancy. 

Although we used quadratic GP mean function to encode the prior assumption of unimodal posterior, we observed that the uncertainty of the ABC posterior near the boundaries of the parameter space during the early iterations can be high leading to multimodality. Such cases can be difficult for the MCMC as it can fail to locate all the modes or sample sufficiently from them. For this reason, the uncertainty quantification based on the proposed IS approach needs to be interpreted cautiously. 
More sophisticated sampling techniques as considered now here might be useful.

\begin{figure}[hbt!] 
\centering
\includegraphics[width=0.8\textwidth]{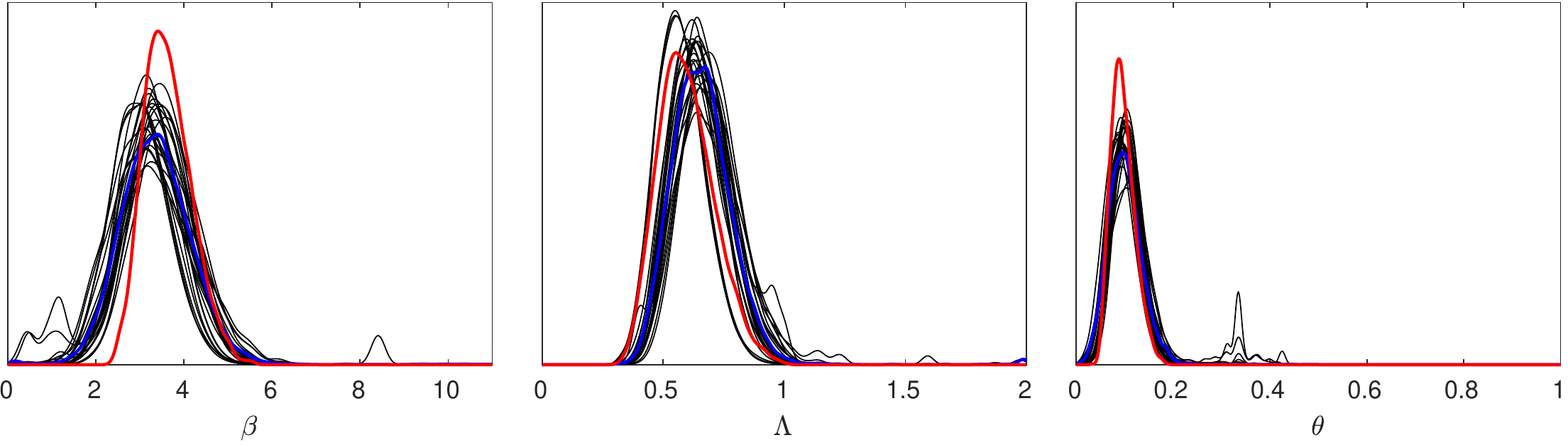} \\ 
\vspace{0.25cm}
\includegraphics[width=0.8\textwidth]{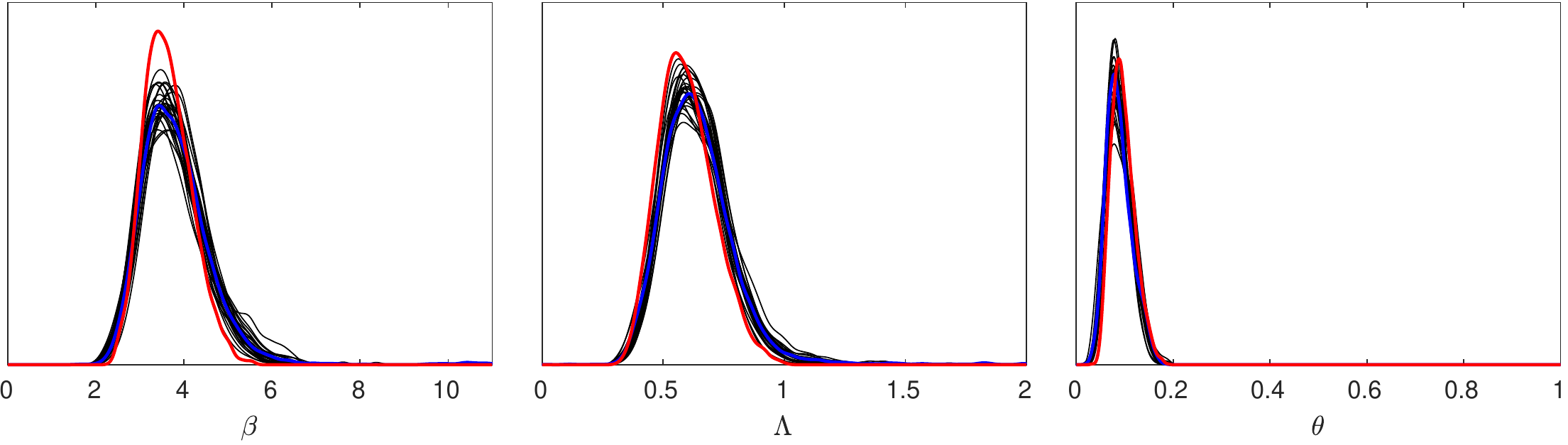}
\caption{Uncertainty quantification for the ABC posterior marginals of the bacterial infections model at the $100$th iteration corresponding $t=120$ simulations (top row) and at the last iteration corresponding $t=620$ simulations (bottom row). Red line shows the ground truth ABC posterior, blue line shows the estimate based on (\ref{eq:pimean}) and the black lines show some sampled ABC marginal posteriors that (approximately) represent the uncertainty due to the limited number of simulations $t$. } \label{fig:bacterial_post_uncertainty}
\end{figure}

\begin{figure}[hbt!] 
\centering
\includegraphics[width=0.85\textwidth]{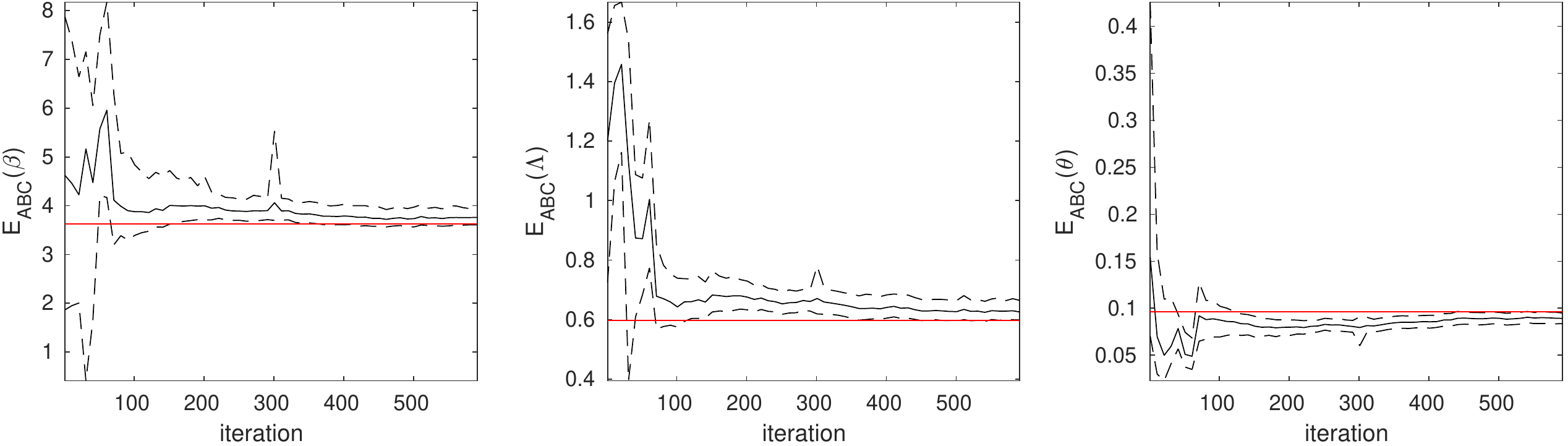}
\caption{Evolution of the uncertainty of the ABC posterior expectations of the bacterial infections model over $600$ iterations corresponding $t=620$ simulations for one typical run of the inference algorithm. This is as Fig.~\ref{fig:lorenz1}(b-c) except that iteration is here not on the log-scale.} \label{fig:bacterial_post_uncertainty_evol}
\end{figure}

\section{Additional experiments: g-and-k model} \label{app:gk}

We present the g-and-k distribution and our additional experiments with this benchmark model. 
The g-and-k model is a probability distribution defined via its quantile function
\begin{equation}
Q(\Phi^{-1}(q);\Btheta) = a + b\left( 1 + c\frac{1-\exp(-g\Phi^{-1}(q))}{1+\exp(-g\Phi^{-1}(q))} \right)(1+(\Phi^{-1}(q))^2)^k \Phi^{-1}(q), \label{eq:gk_eq}
\end{equation}
where $a,b,c,g$ and $k$ are unknown parameters, $q\in[0,1]$ is a quantile and $\Phi^{-1}$ denotes the quantile function of the standard normal distribution. There is no analytical formula for the likelihood but sampling from it is straightforward \cite{Price2018}. 
%
We fix $c=0.8$ as is common in literature and estimate the parameters $\Btheta = (a,b,g,k)$ from $10^4$ samples generated using $\Btheta = (3,1,2,0.5)$ as the true parameter value. We use independent uniform priors $a\sim \Unif([2, 4]), b\sim \Unif([0, 3]), g\sim \Unif([1, 4]), k\sim \Unif([0, 2])$. We consider the four summary statistics defined via an auxiliary model as suggested by \citet{Price2018} and use them to form a Mahalanobis discrepancy function as already described in Section \ref{app:add_exp}. Although the discrepancy is formed only from four summary statistics, we observed that it is very close to Gaussian near the true parameter value, see Fig.~\ref{fig:gaus}. 

The results for the g-and-k model are shown in Fig.~\ref{fig:gk_res1} and Fig.~\ref{fig:gk_res2}. 
The conclusions from the results resemble those of the Lorenz and bacterial infections models in that the proposed batch techniques produce substantial improvements over the corresponding sequential ones. However, the overall approximation errors are slightly larger than for the bacterial model presumably due to more noticeable model misspecification (the variance of the discrepancy is only approximately constant near the true value) and the higher dimensionality of the parameter space. 
Interestingly, in this problem the heuristic \maxv{} method eventually produces the most accurate approximations. However, \eiv{}, producing more conservative estimates, works more reliably if large batch sizes are used.

\begin{figure}[hbtp!] 
\centering
\includegraphics[width=0.44\textwidth]{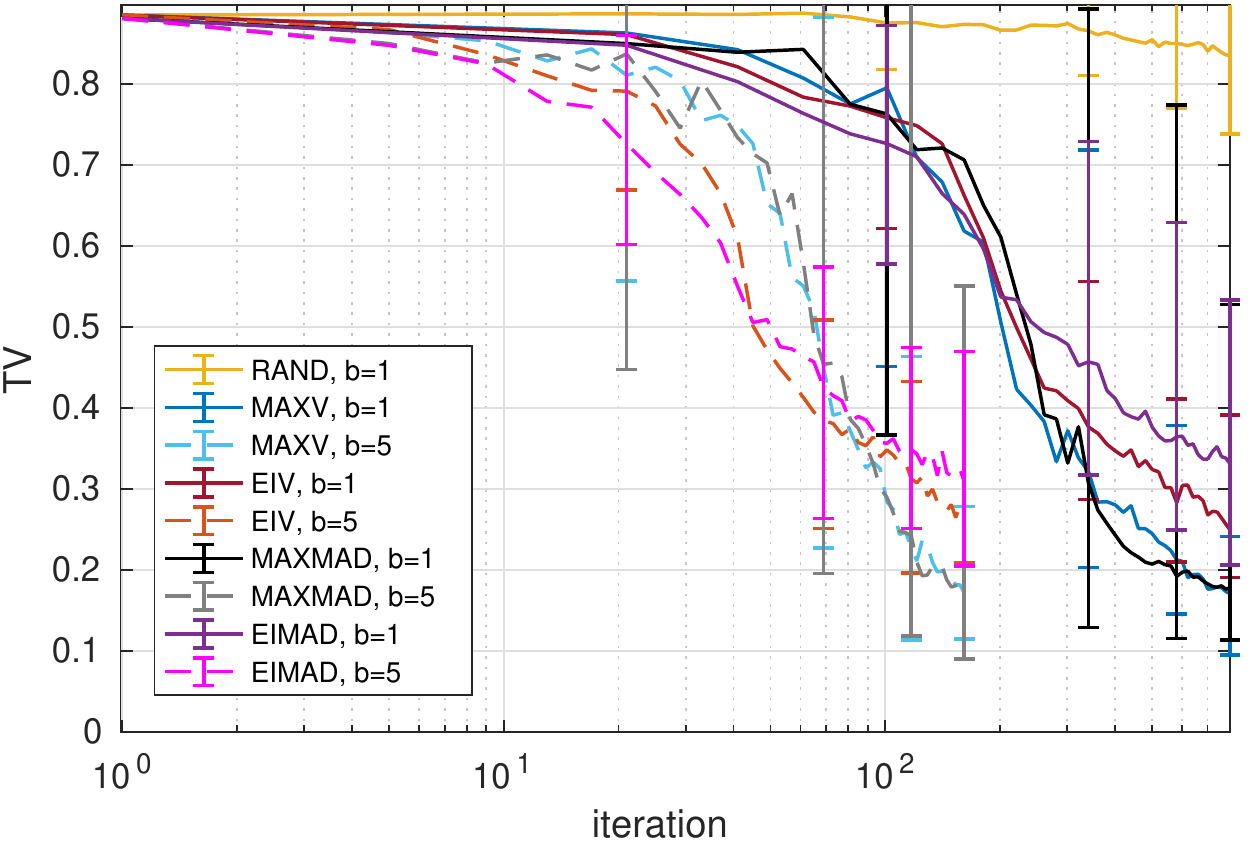}
\caption{Results for the g-and-k model. All proposed methods were tested with two batch sizes.} \label{fig:gk_res1}
\end{figure}

\clearpage 
\begin{figure}[hbtp!] 
\centering
\includegraphics[width=0.42\textwidth]{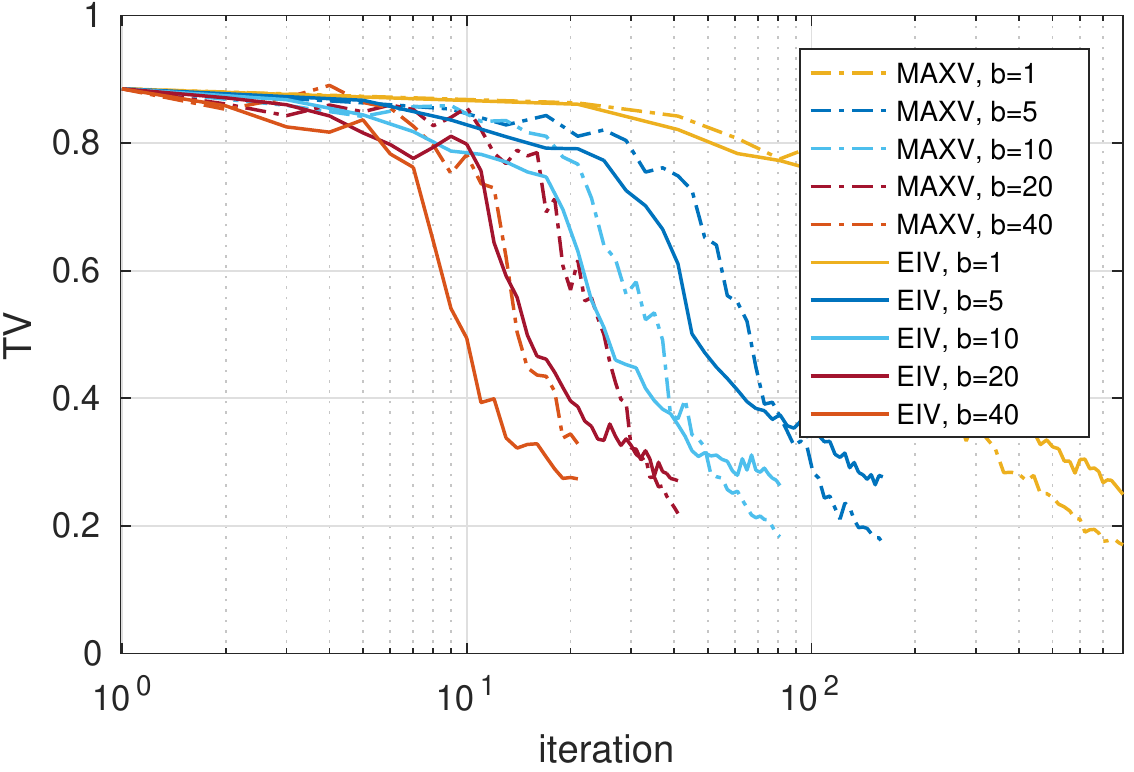}
\caption{Results for the g-and-k model. Further analysis for two methods, \maxv{} and \eiv{}, with varying batch sizes.} \label{fig:gk_res2}
\end{figure}

\end{document}